\Crefname{equation}{Eq.}{Eqs.}
\Crefname{figure}{Fig.}{Figs.}
\Crefname{tabular}{Tab.}{Tabs.}
\Crefname{table}{Tab.}{Tabs.}
\Crefname{section}{Sec.}{Secs.}
\Crefname{appendix}{App.}{Apps.}
\Crefname{proposition}{Prop.}{Props.}
\Crefname{theorem}{Thm.}{Thms.}
\Crefname{algorithm}{Alg.}{Algs.}
\renewcommand{\det}[0]{\text{det}}
\newcommand{\RN}[1]{%
  \textup{\uppercase\expandafter{\romannumeral#1}}%
}
\def\1{\bm{1}}
\def\va{{\bm{a}}}
\def\vx{{\bm{x}}}
\DeclareMathAlphabet{\mathsfit}{\encodingdefault}{\sfdefault}{m}{sl}
\SetMathAlphabet{\mathsfit}{bold}{\encodingdefault}{\sfdefault}{bx}{n}
\def\tA{{\tens{A}}}
\def\tX{{\tens{X}}}
\def\gA{{\mathcal{A}}}
\def\gF{{\mathcal{F}}}
\def\gN{{\mathcal{N}}}
\def\gX{{\mathcal{X}}}
\newcommand{\E}{\mathbb{E}}
\newcommand{\R}{\mathbb{R}}
\newcommand{\KL}{D_{\mathrm{KL}}}
\def\SE{\mathrm{SE}}
\def\SE3{\mathrm{SE}(3)}
\def\SO{\mathrm{SO}}
\def\SO3{\mathrm{SO}(3)}
\def\E{\mathrm{E}}
\def\O{\mathrm{O}}
\def\O3{\mathrm{O}(3)}
\def\GL{\mathrm{GL}}
\def\GL3{\mathrm{GL}(3)}
\def\X{\mathcal{X}}
\def\tX{\tilde{\X}}
\def\Y{\mathcal{Y}}
\def\O{\mathcal{O}}
\def\Z{\mathcal{Z}}
\def\W{\mathcal{W}}
\def\A{\mathcal{A}}
\def\tA{\tilde{\A}}
\def\x{\bm{x}}
\def\v{\bm{v}}
\def\y{\bm{y}}
\def\a{\bm{a}}
\def\z{\bm{z}}
\def\w{\bm{w}}
\def\o{\bm{o}}
\def\rmd{\mathrm{d}}
\newcommand{\scom}{\mathrm{ShiftCoM}}
\newcommand{\eanf}{{\scshape E-ACF}}
\newcommand{\noneanf}{{\scshape Non-E-ACF}}
\newcommand{\ecnf}{{\scshape E-CNF}}
\newcommand{\ediff}{{\scshape E-CNF-Diff}}
\newcommand{\cartproj}{{\scshape Cartesian-proj}}
\newcommand{\sphproj}{{\scshape Spherical-proj}}
\newcommand{\vecproj}{{\scshape Vector-proj}}
\theoremstyle{plain}
\newtheorem{theorem}{Theorem}[section]
\newtheorem{proposition}[theorem]{Proposition}
\theoremstyle{definition}
\theoremstyle{remark}
\newcommand\blfootnote[1]{%
  \begingroup
  \renewcommand\thefootnote{}\footnote{#1}%
  \addtocounter{footnote}{-1}%
  \endgroup
}
\title{SE(3) Equivariant Augmented Coupling Flows}
\author{
  Laurence I.~Midgley\thanks{Equal contribution}\\
  University of Cambridge \\
  \texttt{lim24@cam.ac.uk} \\
  \And
    Vincent Stimper\footnote[1]{} \\
    Max Planck Institute for Intelligent Systems \\
    University of Cambridge \\
    \texttt{vs488@cam.ac.uk} \\
  \And
  Javier Antorán\footnote[1]{} \\
  University of Cambridge \\
  \texttt{ja666@cam.ac.uk} \\
  \And
  Emile Mathieu\footnote[1]{} \\
  University of Cambridge \\
  \texttt{ebm32@cam.ac.uk} \\
  \And
    Bernhard Schölkopf \\
    Max Planck Institute \\ for Intelligent Systems \\
    \texttt{bs@tue.mpg.de} \\
\And
    José Miguel Hernández-Lobato \\
    University of Cambridge \\
    \texttt{jmh233@cam.ac.uk}
}
\begin{document}

\maketitle

\begin{abstract}
Coupling normalizing flows allow for fast sampling and density evaluation, making them the tool of choice for probabilistic modeling of physical systems. 
However, the standard coupling architecture precludes endowing flows that operate on the Cartesian coordinates of atoms with the SE(3) and permutation invariances of physical systems. 
This work proposes a coupling flow that preserves SE(3) and permutation equivariance by performing coordinate splits along additional augmented dimensions. At each layer, the flow maps atoms' positions into learned SE(3) invariant bases, where we apply standard flow transformations, such as monotonic rational-quadratic splines, before returning to the original basis.
Crucially, our flow preserves fast sampling and density evaluation, and may be used to produce unbiased estimates of expectations with respect to the target distribution via importance sampling.
When trained on the DW4, LJ13, and QM9-positional datasets, our flow is competitive with equivariant continuous normalizing flows and diffusion models, while allowing sampling more than an order of magnitude faster.
Moreover, to the best of our knowledge, we are the first to learn the full Boltzmann distribution of alanine dipeptide by only modeling the Cartesian positions of its atoms.
Lastly, we demonstrate that our flow can be trained to approximately sample from the Boltzmann distribution of the DW4 and LJ13 particle systems using only their energy functions.  
\end{abstract}

\section{Introduction}
\label{sec:introduction}
\blfootnote{Code: \url{https://github.com/lollcat/se3-augmented-coupling-flows}}
Modeling the distribution of a molecule's configurations at equilibrium, known as the Boltzmann distribution, is a promising application of deep generative models \citep{Noe2019boltzman}.
While the unnormalized density of the Boltzmann distribution can be obtained via physical modeling, sampling from it typically requires molecular dynamics (MD) simulations, which are expensive and produce correlated samples. 
A promising alternative is to rely on surrogate deep generative models, known as Boltzmann generators. 
We can draw independent samples from these and debias any expectations estimated with the samples via importance weighting.
The Boltzmann distribution typically admits rotation and translation symmetries, known as the Special Euclidean group $\mathrm{SE}(3)$, as well as permutation symmetry.
These constraints are important to incorporate into the model as they improve training efficiency and generalization \citep{Cohen2017,Batzner2022,kohler2020equivariant}.
Other key desiderata of Boltzmann generators are that they allow for fast sampling and density evaluation. These are necessary for energy-based training, that is, training using the Boltzmann distribution's unnormalized density \citep{Noe2019boltzman,Stimper2022,Midgley2022fab}. 
Training by energy is critical, as it prevents the model quality from being constrained by the quality and quantity of MD samples.

Existing coupling flows which approximate the Boltzmann distribution of molecules are at least partially defined over internal coordinates, i.e.\ bond distances, bond angles, and dihedral angles \citep{wu2020stochastic,campbell2021gradient,Koehler2023,Midgley2022fab}, which are $\mathrm{SE}(3)$ invariant. 
However, the definition of these depends on the molecular graph and they are non-unique for most graphs.
Furthermore, models parameterized in internal coordinates struggle to capture interactions among nodes far apart in the graph and cannot capture the permutation invariance of some atoms. Thus, they are not suitable for particle systems such as LJ13 \citep{kohler2020equivariant}.
$\mathrm{SE}(3)$ equivariance constraints have been applied to
continuous normalizing flows (CNFs) operating on Cartesian coordinates \citep{kohler2020equivariant,satorras2021Equivariant}, and to the closely related diffusion models \citep{xu2022geodiff,Hoogeboom2022Diffusion,yim2023SE}.
These models are built upon $\mathrm{SE}(3)$ equivariant graph neural networks (GNNs) \citep{Satorras2021GNN,Geiger2022e3nn,Batzner2022}. 
These architectures can be applied to any molecular graph \citep{jing2022torsional}, enabling a single generative model to generalize across many molecules. 
Alas, sampling and evaluating the density of CNFs and diffusion models typically requires thousands of neural network evaluations \citep{xiao2022Trilemma}, preventing them from being trained by energy. 
As such, presently no Boltzmann generator exists that (i) acts on Euclidean coordinates of atoms (ii) enforces $\mathrm{SE}(3)$ equivariance, and (iii) allows for fast sampling.

To address this gap, we propose a flexible $\mathrm{SE}(3)$ equivariant coupling flow that operates on the Cartesian coordinates of atoms, allowing for fast sampling and density evaluation. 
Our contributions~are:
    \begin{itemize}[leftmargin=*]
        \item We extend coupling layers to be $\SE3$ equivariant by augmenting their input space with auxiliary variables \citep{huang2020augmented} which can be acted upon on by $\SE3$.
        We update the atom positions conditioned on the auxiliary variables by first projecting the atoms into an $\mathrm{SE}(3)$-invariant space and then
        applying a standard normalizing flow transform before projecting its output back onto the equivariant space.
        \item We demonstrate that, when trained by maximum likelihood, our flow matches the performance of existing $\mathrm{SE}(3)$ CNFs and diffusion models as well as coupling flows operating on internal coordinates on molecular generation tasks.
        Our flow is more than 10 times faster to sample from than $\mathrm{SE}(3)$ CNFs and diffusion models.
        Concurrently with \cite{klein2023equivariant}, we are the first to learn the full Boltzmann distribution of alanine dipeptide solely in Cartesian coordinates. 
        \item We demonstrate our flow in the energy-based training setting on DW4 and LJ13, where parameters are learned using only the molecular energy function. 
        Energy-based training of CNFs or diffusion models is intractable due to slow sampling and density evaluation. 
        Flows operating on internal coordinates are not able to capture the permutation invariance of these problems.
        Hence, our flow is the only existing permutation and $\mathrm{SE}(3)$ equivariant method that can tractably be applied there. 
\end{itemize}

\section{Background: coupling flows and invariant models}
\label{sec:background}

\subsection{Normalizing flows and coupling transforms}

A (discrete-time) normalizing flow is a flexible parametric family of densities on $\X$ as the push-forward of a base density $q_0$ along an invertible automorphism $f_\theta: \gX \rightarrow \gX$ with parameters $\theta \in \Theta$ \citep{papamakarios2021normalizing}. The density is given by the change of variable formula:
\begin{equation}\label{eq:change_of_vars}
\textstyle
q_\theta(x) = q_0(f^{-1}(x)) \, \left| \det \frac{\partial f_\theta^{-1}(x)}{\partial x} \right|.
\end{equation}
We can efficiently sample from the flow by sampling from $q_0$ and mapping these samples through $f_\theta$ in a single forward pass.
A popular way to construct $f_\theta$ is to use coupling transforms. The $D$ dimensional input $\x \in \gX$ is split into two sets, transforming the first set conditional on the second, while leaving the second set unchanged: 
\begin{equation}
\begin{split}
   \y_{1:d}  =\,&  \mathcal{T}(\x_{1:d}; \x_{d+1:D}),  \\
   \y_{d+1:D} =\,& \x_{d+1:D}.
\end{split}
\end{equation}
They induce a lower triangular Jacobian, such that its determinant becomes $\left|{\partial \mathcal{T}(\x_{1:d}; \x_{d+1:D})}/{\partial \x_{1:d}} \right|$. Further, choosing $\mathcal{T}$ to have an easy to compute determinant, such as an elementwise transformation \citep{Dinh2014NICE,Dinh2017RealNVP,Durkan2019spline}, allows for fast density evaluation and sampling at low computational cost.

\subsection{Equivariance and invariance for coupling flow models of molecular conformations}
\label{sec:invariance}

Throughout, we deal with observations of an $n$-body system represented by a matrix $\x = \left[x^1, \dots, x^n \right] \in \X = \R^{3\times n}$, where the rows index Cartesian coordinates and the columns index individual particles.
We seek to construct flows on $\X$ endowed with the symmetries present in molecular energy functions. These are invariant to rotations and translations of $\vx$ ($\SE3$), and to permutation of atoms of the same type ($S_n$). We will formalize them in this section.

\textbf{Symmetry groups} \ \ 
The special Euclidean group $\SE3$ is the set of orientation preserving
rigid transformations in Euclidean space.
Its elements $t \in \SE3$ can be decomposed into two components $t=(R, u)$ where $R \in \SO3$ is a $3\times 3$ rotation matrix and $u \in \R^3$ represents a translation;
for a coordinate $v \in \R^3,$\ $t\cdot v = R v + u$ denotes the action of $t$ on $v$.
The symmetric group $S_n$ defined over a set of $n$ atoms consists of all $n!$ permutations that can be performed with said atoms. Its elements $\sigma \in S_n$ act on an $n$-body system as $ \sigma \cdot \x = [x^{\sigma(1)}, \dotsc, x^{\sigma(n)}]$.

%


\textbf{Equivariant maps} \ \
A map $f: \X \rightarrow \Y$ is said to be $G$-\emph{equivariant} if it commutes with the group action $\cdot$, i.e.\ if for any $x \in \X$ and $g \in G$ we have $f(g \cdot x) = g \cdot f(x)$. Invariance is a special case 
where for any $x \in \X$ and $g \in G$ we have $f(g \cdot x) = f(x)$. 
There has been a plethora of recent work on constructing graph neural network functions equivariant to the action of $G = \SE3 \times S_n$ \citep[e.g.][]{thomas2018tensor,Satorras2021GNN,Geiger2022e3nn}, which we will leverage to construct our equivariant coupling flow model.

\textbf{Invariant density}\ \ 
A density $p: \X \rightarrow \R_+$ is $G$-invariant if for any $x \in \X$ and $g \in G$ we have $p(g \cdot x) \!= \!p(x)$. 
Combining an invariant base density $q_0$ with an equivariant invertible transform $f$, as in \eqref{eq:change_of_vars}, yields an invariant flow~density \citep{papamakarios2021normalizing,kohler2020equivariant}. This gives a practical way to design invariant densities models.


\textbf{Challenges in constructing $\bm{\SO3 \times S_n}$ invariant flow models} \ \
Unfortunately, no coupling transform can be simultaneously equivariant to both permutation of the particles and their rotations; coupling splits must be performed either across particles or spatial dimensions which would break either permutation or rotational symmetry \citep{kohler2020equivariant,bose2022equivariant}.

Furthermore, there does not exist a translation invariant \emph{probability} measure, as any such measure would be proportional to the Lebesgue measure and therefore not have unit volume. This precludes us from defining an invariant base distribution directly on $\gX$. Fortunately, Proposition \ref{prop:translation_invariant_measures} and its converse allow us to disintegrate the probability measure into a translational measure proportional to the Lebesgue measure and an $\SO3$-invariant probability measure on the subspace of $\R^{3\times n}$ with zero center of mass. We can drop the former and only model the latter.


\section{Method: \texorpdfstring{$\bm{\SE3 \times S_n}$}{SE3xSn} equivariant augmented coupling flow model}
\label{sec:method}

This section describes our main contribution, 
an $\SE3 \times S_n$ equivariant coupling flow. 
We first lay the groundwork for achieving translation invariance by defining our flow density on a lower-dimensional ``zero Center of Mass (CoM)'' space. To preserve permutation and rotation equivariance, we leverage the augmented flow framework of \citet{huang2020augmented}. Specifically, we use sets of augmented variables as a pivot for coupling transforms. 
\cref{subsec:coupling_transform} introduces a novel class of coupling transforms that achieve the aforementioned permutation and rotation equivariance by operating on atoms projected using a set of equivariant bases. \cref{subsec:base_distribution} describes our choice of invariant base distribution and, finally, in \cref{subsec:training_and_densities}, we discuss several schemes to train the augmented flow from either samples or energy functions and how to perform efficient density evaluation.

\textbf{Translation invariance} \ 
is obtained by 
modelling the data on the quotient space $\R^{3\times n} /~\R^3  \triangleq \tX \subseteq \X$, where all $n$-body systems that only differ by a translation are ``glued'' together, i.e.\ where $\x \sim \x'$ if $\x = \x' + p$ with $p \in \R^3$.
Constructing a parametric probabilistic model over $\tX$, automatically endowing it with translation invariance.
In practice, we still work with Cartesian coordinates, but center the data so as to zero its CoM: 
$\tilde \x \triangleq \x - \bar{\x}$ with $\bar{\x} \triangleq \tfrac{1}{n}\sum_{i=1}^n [\x]^i$. Thus, $\tilde \x$ lies on $\tilde \X$, an $(n-1)\times 3$ dimensional hyperplane embedded in $\X$.

\textbf{Augmented variable pivoted coupling} \ \
To allow for simultaneously permutation and rotation equivariant coupling transforms, we introduce  \emph{augmented} variables $\a \in \A$. Our coupling layers update the particle positions $\x$ conditioned on $\a$ and vice-versa.
The augmented variables need to ``behave'' similarly to $\x$, in that they can also be acted upon by elements of $\SE3 \times S_n$. We achieve this by choosing $\va$ to be a set of $k$ of observation-sized arrays $\A = \X^k$, which we will discuss further in \cref{sec:app_multiple_aug_var}. 
Importantly, we do not restrict $\va$ to be zero-CoM.

\textbf{Invariant flow density on the extended space} \ \ 
We parameterize a density $q$ over the extended space $\tX \times \A$ w.r.t.\ the product measure $\lambda_{\tX} \otimes \lambda_\A$, where $\lambda_{\tX} \in \mathcal{P}(\tX)$ and $\lambda_\A \in \mathcal{P}(\A)$ respectively denote the Lebesgue measure on $\tX$ and $\A$.
We use $q(\x, \a)$ as a shorthand for the density of the corresponding zero-CoM projection $q(\tilde{\x}, \a)$.
The density $q$ is constructed to be invariant to the action of $G \triangleq \SO3 \times S_n$ when simultaneously applied to the observed and augmented variables, that is, $q(\x,\a) = q(g\cdot\x,g\cdot\a)$ for any $g \in G$. We aim to construct a $G$-equivariant flow $f: \tX \times \A \to \tX \times \A$ on this extended space, and combine it with $q_0: \tilde \X \times \A \to \R^+$, a $G$-invariant base density function, to yield the invariant flow density
\begin{gather}
    \textstyle
	q(\x,\a) = q_0(f^{-1}(\x,\a)) \, \left|\det \frac{\partial f^{-1}(\x,\a)}{\partial (\x,\a))}\right|.
\end{gather}
\begin{proposition}[Invariant marginal]
	\label{prop:inv_marginal}
	Assume $q: \X \times \A \rightarrow \R_+$ is a G-invariant density
	over the probability space $(\X \times \A, \lambda_\X \otimes \lambda_\A)$, then
	$q_{\x}\triangleq \int_\A q(\cdot, \a) \lambda_\A(\rmd \a): \X \rightarrow \R_+$ is a G-invariant density w.r.t.\ to the measure $\lambda_\X$.
\end{proposition}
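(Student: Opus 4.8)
The plan is to verify the two defining properties of ``$G$-invariant density w.r.t.\ $\lambda_\X$'' separately: (i) that $q_\x$ is a bona fide probability density, and (ii) that it is $G$-invariant. For (i) I would simply invoke Tonelli's theorem: since $q \geq 0$ and $q$ is measurable on the product space, $\x \mapsto q_\x(\x) = \int_\A q(\x,\a)\,\lambda_\A(\rmd\a)$ is measurable, and
\begin{equation*}
\int_\X q_\x(\x)\,\lambda_\X(\rmd\x) \;=\; \int_\X\!\!\int_\A q(\x,\a)\,\lambda_\A(\rmd\a)\,\lambda_\X(\rmd\x) \;=\; \int_{\X\times\A} q \;\rmd(\lambda_\X\otimes\lambda_\A) \;=\; 1,
\end{equation*}
using that $q$ is a density on $(\X\times\A,\lambda_\X\otimes\lambda_\A)$.

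For (ii), fix $g\in G$ and compute $q_\x(g\cdot\x) = \int_\A q(g\cdot\x,\a)\,\lambda_\A(\rmd\a)$. Writing $\a = g\cdot(g^{-1}\cdot\a)$ and using the $G$-invariance of $q$ under the \emph{joint} action, $q(g\cdot\x,\a) = q(g\cdot\x, g\cdot(g^{-1}\cdot\a)) = q(\x, g^{-1}\cdot\a)$. Hence $q_\x(g\cdot\x) = \int_\A q(\x, g^{-1}\cdot\a)\,\lambda_\A(\rmd\a)$, and it remains to perform the change of variables $\a \mapsto g\cdot\a$ inside the integral. The crucial point is that the action of $G = \SO3\times S_n$ on $\A = \X^k$ preserves the Lebesgue measure $\lambda_\A$: each rotation $R\in\SO3$ acts as an orthogonal linear map (Jacobian determinant $\pm 1$, in fact $+1$), and each permutation $\sigma\in S_n$ merely reindexes coordinates, which is also volume-preserving; the same applies componentwise across the $k$ copies. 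Therefore the pushforward of $\lambda_\A$ under $\a\mapsto g\cdot\a$ equals $\lambda_\A$, giving $\int_\A q(\x, g^{-1}\cdot\a)\,\lambda_\A(\rmd\a) = \int_\A q(\x,\a)\,\lambda_\A(\rmd\a) = q_\x(\x)$, which is the claim.

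The main obstacle is not the computation above, which is short, but making the measure-theoretic bookkeeping airtight: one must confirm that the group action on $\A$ is by $\lambda_\A$-measurable, measure-preserving bijections so that the change-of-variables step is legitimate, and invoke Tonelli (rather than Fubini) because integrability of $q_\x$ is part of what is being established. I would also remark explicitly that $G$ here contains no translations, so there is no tension with the earlier observation that no translation-invariant probability measure exists on $\X$ (or $\tilde\X$) --- the invariance of $\lambda_\X$ and $\lambda_\A$ under the orthogonal/permutation action is all that is needed, and holds equally on the zero-CoM hyperplane to which the proposition is applied in practice.
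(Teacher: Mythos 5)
Your proposal is correct and follows essentially the same route as the paper's proof: both reduce the claim to the substitution $\a \mapsto g\cdot\a$ in the $\A$-integral, using the joint $G$-invariance of $q$ together with the invariance of $\lambda_\A$ under the rotation/permutation action (and $g^{-1}\A=\A$). Your additional remarks on Tonelli, measurability, and the unit-Jacobian justification of measure preservation are fine elaborations of the same argument rather than a different approach.
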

\begin{proof}
	\label{proof:marginal_invariance}
	For any $g \in G$, $\x \in \X$ and $\a \in \A$
	\begin{align*}
        \textstyle
		q_{\x}(g \cdot \x)
		&= \!\!\int_{\A} \!q(g \cdot \x, \a) \lambda_\A(\rmd \a)
		= \!\!\int_{g^{-1} \A} \!\!q(g \cdot \x, g \cdot \a) \lambda_\A(\rmd ~g \cdot \a)  \notag 
		= \!\!\int_{\A} \!q(\x,\a) \lambda_\A(\rmd\a) = q_{\x}(\x),
	\end{align*}
	where we used the $G$-invariance of $q$ and of the measure $\lambda_\A$, as well as $g^{-1} \A = \A$.
\end{proof}

\subsection{\texorpdfstring{$\bm{\SE3}$}{SE3} and permutation equivariant coupling transform}
\label{subsec:coupling_transform}

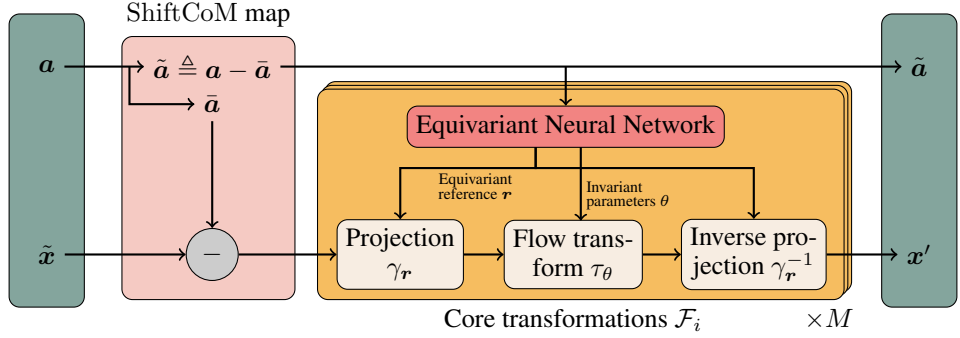
\begin{figure}
    \vspace{-0.5cm}  
    \centering
    \def\xh{0.5}
	\def\ah{3}
    \definecolor{io}{HTML}{84a59d}
    \definecolor{com}{HTML}{f5cac3}
    \definecolor{inner}{HTML}{f6bd60}
    \definecolor{pt}{HTML}{f7ede2}
    \definecolor{enn}{HTML}{f28482}
	\begin{tikzpicture}[
			io/.style={fill=io,rounded corners},
			com/.style={fill=com,rounded corners},
			inner/.style={fill=inner,rounded corners},
			pt/.style={fill=pt,rounded corners},
			enn/.style={fill=enn,rounded corners},
			arr/.style={->,thick}
		]
		\draw[io] (-0.2,-0.2) rectangle (0.8, 3.7);
		\node (x1) at (0.3, \xh) {$\tilde{\x}$};
		\node (a1) at (0.3, \ah) {$\a$};
		
		\draw[io] (11.4,-0.2) rectangle (12.4, 3.7);
		\node (x2) at (11.9, \xh) {$\x'$};
		\node (a3) at (11.9, \ah) {$\tilde{\a}$};
		
		\draw[com] (1.3, -0.1) rectangle (3.6, 3.4);
		\node[above] at (2.45, 3.4) {$\scom$ map};
		\node (a2) at (2.5, \ah) {$\tilde{\a}\triangleq \bm{a} - \bar{\a}$};
		\node (ab) at (2.5, \ah-0.5) {$\bar{\a}$};
		\draw[arr] (a1) -- (a2);
		\draw[arr] (a2) -- (a3);
		\draw[arr] (1.4, \ah) -- (1.4, \ah-0.5) -- (ab);
		\node[draw,circle,fill=black!20!white] (plus) at (2.5, \xh) {$-$};
		\draw[arr] (ab) -- (plus);
		
		\draw[inner] (4, 0) rectangle (11, 2.8);
		\draw[inner] (3.95, -0.05) rectangle (10.95, 2.75);
		\draw[inner] (3.9, -0.1) rectangle (10.9, 2.7);
		\node[below] at (7.3, -0.1) {Core transformations $\gF_i$};
		\node[below] at (10.7, -0.1) {$\times M$};
		\node[pt,draw,rectangle,align=center,minimum height=0.9cm] (proj) at (5, \xh) {Projection \\ $\gamma_{\bm{r}}$};
		\node[pt,draw,rectangle,align=center,minimum height=0.9cm] (trans) at (7.3, \xh) {Flow trans- \\ form $\tau_\theta$};
		\node[pt,draw,rectangle,align=center,minimum height=0.9cm] (unproj) at (9.7, \xh) {Inverse pro- \\ jection $\gamma_{\bm{r}}^{-1}$};
		\node[enn,draw,rectangle] (enn) at (7.2, 2.2) {Equivariant Neural Network};
		\draw[arr] (7.2, \ah) -- (enn);
		\node[below] at (6, 1.7) {\tiny Equivariant};
        \node[below] at (6, 1.5) {\tiny reference $\bm{r}$};
		\draw[arr] (6.8, 1.93) -- (6.8, 1.65) -- (5, 1.65) -- (proj);
		\draw[arr] (6.8, 1.93) -- (6.8, 1.65) -- (9.7, 1.65) -- (unproj);
		\draw[arr] (7.4, 1.93) -- (7.4, 0.95);
        \node[right] at (7.35, 1.4) {\tiny Invariant};
		\node[right] at (7.35, 1.2) {\tiny parameters $\theta$};
		
		\draw[arr] (x1) -- (plus);
		\draw[arr] (plus) -- (proj);
		\draw[arr] (proj) -- (trans);
		\draw[arr] (trans) -- (unproj);
		\draw[arr] (unproj) -- (x2);
	\end{tikzpicture}
    \caption{Illustration of the equivariant coupling layer of our augmented normalizing flow, where our variable with zero center of mass (CoM) $\tilde{\x}$ is transformed with the augmented variable $\a$.}
    \label{fig:method}
\end{figure}

We now derive our $\SE3 \times S_n$ equivariant map $f: \tX \times \A \rightarrow  \tX \times \A$ defined on the extended space.
We introduce two modules, a shift-CoM transform, swapping the center of mass between the observed and augmented variables, and an equivariant core transformation, which updates $\tilde{\x}$ conditional on $\a$ and vice versa.
Composing them yields our equivariant coupling layer illustrated in \cref{fig:method}.

\textbf{$\bm{\SE3 \times S_n}$ equivariant coupling} \ \ 
We dissociate the equivariance constraint from the flexible parameterized flow transformation by (1) projecting the atoms' Cartesian coordinates into a learned, local (per-atom) invariant space, (2) applying the flexible flow to the invariant representation of the atoms' positions, and (3) then projecting back into the original Cartesian space.  
Specifically, we construct
a core coupling  transformation that composes (a) an invariant map $\gamma: \Xi \times \X \rightarrow \Y$, where $\Y$ is isomorphic to $\X$, and $ \bm{r} \in \Xi$ parametrizes the map. 
We denote the parametrized map as $\gamma_{\bm{r}}$. It is followed by (b) a standard flexible normalizing flow transform, e.g. a neural spline, $\tau: \Y \rightarrow  \Y$, and (c) the inverse map $\gamma^{-1}$.
Denoting the inputs with superscript $\ell$ and with $\ell+1$ for outputs, our core transformation $\gF: (\x^{\ell}; \a^{\ell}) \mapsto (\x^{\ell+1}, \a^{\ell+1})$ is given as
\begin{equation}
	\begin{split}
		\x^{\ell+1} &= \gamma^{-1}_{\bm{r}} \cdot \tau_{\theta} ( \gamma_{\bm{r}} \cdot \x^{\ell}  ), \quad \text{with} \quad (\bm{r}, \theta) = h(\a^{\ell}) ,\\
		\a^{\ell+1} &= \a^{\ell}.  
	\end{split}
    \label{eq:equivariant_coupling}
\end{equation}
Here, $h$ is a (graph) neural network that returns a set of equivariant reference vectors $\bm{r}$, which parametrize the map $\gamma_{\bm{r}}$, and invariant parameters $\theta$. $\Y$ is a rotation invariant space. This means that any rotations applied to the inputs will be cancelled by $\gamma_{\bm{r}}$, i.e. $ \gamma_{g \cdot \bm{r}} = \gamma_{ \bm{r}} \cdot g^{-1}$ or equivalently $(\gamma_{g \cdot \bm{r}})^{-1} =  g \cdot \gamma^{-1}_{ \bm{r}}$ for all $g \in \SO3$.
We use the inverse projection $\gamma_{\bm{r}}^{-1}$ to map the invariant features back to equivariant features.
The function $\tau_{\theta}$ is a standard invertible inner-transformation such as an affine or spline based transform, that we apply to the invariant features \citep{papamakarios2021normalizing}.

We now show that the above described transform is rotation and permutation equivariant.

\begin{proposition}[Equivariant augmented coupling flow]
	\label{prop:equiv_coupling_flow}
	If $h: \A \to \X^n \times \Theta^n$ is $\SO3$-equivariant for its first output, $\SO3$-invariant for its second, and $S_n$ equivariant for both, and $(\gamma_{g \cdot \bm{r}})^{-1} =  g \cdot \gamma^{-1}_{ \bm{r}}$ 
 for all $g \in \SO3$, the transform $\gF$ given by \eqref{eq:equivariant_coupling} is $\SO3 \times S_n$ equivariant.
\end{proposition}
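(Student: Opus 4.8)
The plan is to verify the defining equivariance identity $\gF\bigl(g\cdot(\x,\a)\bigr)=g\cdot\gF(\x,\a)$ directly, where $g\in G\triangleq\SO3\times S_n$ acts diagonally on the pair $(\x,\a)$. Since the coordinate-rotation action of $\SO3$ (on the rows of $\x$ and of each of the $k$ blocks of $\a$) commutes with the particle-permutation action of $S_n$ (on the columns), it is enough to establish equivariance under $\SO3$ and under $S_n$ separately; composing the two then yields the claim for the full product group.

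First I would handle rotations: fix $g\in\SO3$. The augmented branch is immediate since $\a^{\ell+1}=\a^{\ell}$. For the observed branch, $\SO3$-equivariance of $h$ in its first output and $\SO3$-invariance in its second give $h(g\cdot\a^{\ell})=(g\cdot\bm{r},\theta)$ with $(\bm{r},\theta)=h(\a^{\ell})$, so by \eqref{eq:equivariant_coupling} the new observed value is $\gamma^{-1}_{g\cdot\bm{r}}\cdot\tau_\theta\bigl(\gamma_{g\cdot\bm{r}}\cdot(g\cdot\x^{\ell})\bigr)$. Using the hypothesis in the form $\gamma_{g\cdot\bm{r}}=\gamma_{\bm{r}}\cdot g^{-1}$ collapses $\gamma_{g\cdot\bm{r}}\cdot g\cdot\x^{\ell}$ to $\gamma_{\bm{r}}\cdot\x^{\ell}\in\Y$; because $\Y$ is a rotation-invariant space, $\tau_\theta$ is unaffected by $g$. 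Then the equivalent form $(\gamma_{g\cdot\bm{r}})^{-1}=g\cdot\gamma^{-1}_{\bm{r}}$ pulls $g$ back out in front, giving $g\cdot\gamma^{-1}_{\bm{r}}\cdot\tau_\theta(\gamma_{\bm{r}}\cdot\x^{\ell})=g\cdot\x^{\ell+1}$, as needed.

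Next I would handle permutations: fix $\sigma\in S_n$; the augmented branch is again immediate. Here the essential point is that the projection $\gamma_{\bm{r}}$, the inner transform $\tau_\theta$, and the inverse projection $\gamma^{-1}_{\bm{r}}$ are all applied per particle — the $i$-th atom of the output depends only on the $i$-th atom of the input and on the $i$-th reference $\bm{r}^i$ (resp.\ parameter $\theta^i$) — so each of these maps satisfies its own permutation-equivariance identity, e.g.\ $\gamma_{\sigma\cdot\bm{r}}\cdot(\sigma\cdot\x)=\sigma\cdot(\gamma_{\bm{r}}\cdot\x)$, and similarly for $\tau$ and $\gamma^{-1}$. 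Combined with $S_n$-equivariance of $h$ in both outputs, $h(\sigma\cdot\a^{\ell})=(\sigma\cdot\bm{r},\sigma\cdot\theta)$, composing the three per-particle identities gives the new observed value $=\sigma\cdot\gamma^{-1}_{\bm{r}}\cdot\tau_\theta(\gamma_{\bm{r}}\cdot\x^{\ell})=\sigma\cdot\x^{\ell+1}$.

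The hard part will be the $S_n$ step, not in difficulty but in making it rigorous: one must spell out that $\gamma$ and $\tau_\theta$ genuinely act atom-wise and that the reference vectors and inner-transform parameters are indexed by the atoms, so that a relabeling $\sigma$ permutes atoms together with their references and parameters — this is precisely what the $S_n$-equivariance hypothesis on $h$ encodes. The $\SO3$ step is essentially bookkeeping once both forms of the identity $(\gamma_{g\cdot\bm{r}})^{-1}=g\cdot\gamma^{-1}_{\bm{r}}$ are available, and it is worth recording explicitly that the $\SO3$ action on $\Y$ is trivial, so that no equivariance property of $\tau_\theta$ under rotations is required. Passing from the two factors to $G=\SO3\times S_n$ needs nothing beyond the commutativity of the two actions noted at the outset.
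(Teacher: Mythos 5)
Your proposal is correct and follows essentially the same argument as the paper's proof: the rotation case via $h(g\cdot\a)=(g\cdot\bm{r},\theta)$ together with $(\gamma_{g\cdot\bm{r}})^{-1}=g\cdot\gamma^{-1}_{\bm{r}}$, and the permutation case via $h(\sigma\cdot\a)=(\sigma\cdot\bm{r},\sigma\cdot\theta)$ plus the atom-wise action of $\gamma_{\bm{r}}$ and $\tau_\theta$. Your explicit remarks on combining the two factors by commutativity and on the triviality of the $\SO3$ action on $\Y$ are fine additions but do not change the route.
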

\begin{proof}
	\label{proof:equiv_coupling_flow}
	For $\SO3$: We first notice that $h(g \cdot \a) = (g \cdot \bm{r}, \theta)$, and then since $(\gamma_{g\cdot \bm{r}})^{-1} = g \cdot \gamma_{\bm{r}}^{-1}$ we have 
	$\gF(g \cdot \x, g\cdot\a) = (\gamma_{g \cdot \bm{r}})^{-1} \cdot \tau_{\theta} ( \gamma_{g \cdot \bm{r}} \cdot g \cdot  \x^{\ell}  ) = g \cdot \gamma^{-1}_{\bm{r}} \cdot \tau_{\theta} ( \gamma_{\bm{r}} \cdot g^{-1} \cdot g\cdot \x^{\ell}  ) = g\cdot \gF(\x, \a)$.

    For $S_n$: We first note that $h(\sigma \cdot \a) = (\sigma \cdot \bm{r}, \sigma \cdot \theta)$. Then, using that $\gamma_{\bm{r}}$ and $\tau$ act on $\x$ atom-wise, we have $ \gF(\sigma \cdot\x, \sigma\cdot\a) 
    = \gamma^{-1}_{\sigma\cdot \bm{r}} \cdot \tau_{\sigma \cdot \theta} ( \gamma_{\sigma\cdot \bm{r}} \cdot  ( \sigma \cdot \x ) ) 
    = (\sigma\cdot \gamma^{-1}_{\bm{r}}) \cdot (\sigma\cdot\tau_{\theta}) ((\sigma \cdot \gamma_{\bm{r}}) \cdot (\sigma\cdot\x ) ) 
    = \sigma\cdot \gF(\x, \a)$. 
\end{proof}

For the Jacobian of the coupling described above to be well-defined, the variable being transformed must be non-zero CoM (see \cref{app:subspace_jac} for a derivation).
Thus, although our observations live on $\tX$, for now, assume that the inputs to the transform are not zero-CoM and we will deal with this assumption in the following paragraphs. 
This choice also allows us to use standard equivariant GNNs for $h$ \citep{Satorras2021GNN,Geiger2022e3nn} which leverage per-node features defined in the ambient space, such as atom type and molecular graph connectivity.

\textbf{Choices of projection $\bm{\gamma}$} \ \ 
The equivariant vectors $\bm{r}$ parameterize a local (per-atom) $\SO3$ equivariant reference frame used in the projection $\gamma_{\bm{r}}$.
We introduce three different projection strategies. 
\begin{enumerate*}[label=(\roman*)]
\item The first strategy, is for $\bm{r}$ to parameterize frame composed of an origin and orthonormal rotation matrix into which we project each atom's positions.
We then take $\tau$ to be a dimension-wise transformation for each of the projected atoms' coordinates.
We dub this method \cartproj.
\item Alternatively, we let $\bm{r}$ parameterize a 
origin, zenith direction and azimuth direction
for spherical coordinates, 
as in \cite{yi2022spherical}. We then apply elementwise transforms to each atom's radius, polar angle and azimuthal angle. We call this \sphproj.
\item Lastly, consider a variant of \sphproj\ where just the radius is transformed and the polar and azimuth angles are held constant. Here, $\bm{r}$ parameterizes a single reference point, a per-atom origin.
 We refer to this last variant as \vecproj.
\end{enumerate*}

\textbf{Architectural details} \ \ 
For the transformations applied in the invariant projected space we consider affine mappings \citep{Dinh2017RealNVP} and monotonic rational-quadratic splines \citep{Durkan2019spline}.
Additionally, to limit computational cost, we have our GNNs $h$ output $M$ sets of reference vectors $\bm{r}$ and invariant parameters $\theta$. These parametrize $M$ core coupling transformations with a single GNN forward pass.
For the \cartproj\ and \sphproj\ flow we include a loss term that discourages certain reference vectors from being collinear, which improves the projection's stability. 
We provide further details for this and the various projection types in \cref{app:projections}.

\textbf{Center of mass shift} \ \ 
The shift-CoM transform allows us to apply the aforementioned $\SE3 \times S_n$ equivariant coupling in the ambient space rather than zero-COM subspace. In particular, before transforming our observed vector $\tilde \x \in \tX$ with $\gF$, we lift it onto $\X$. 
We achieve this by swapping the center of mass between $\tilde \x$ and $\a$.
For now, assume $\A = \X$, i.e. $k=1$, with \cref{sec:app_multiple_aug_var} providing details for $k>1$. 
Letting $\tA \subseteq \A$ be the subspace where all augmented variables that differ by a translation occupy the same point, and  $\tilde \a \in \tA$ be defined analogously to $\tilde \x$,  we apply the map $\scom: \tX \times \gA \mapsto \gX \times \tA$ which  acts on both of its arguments by subtracting from each of them the latter's CoM, that is,
\begin{gather}
    \textstyle
	\scom(\tilde \x, \a) \triangleq (\tilde \x - \bar{\a},\, \a - \bar{\a}) \quad \textrm{with} \quad \bar{\a} \triangleq \frac{1}{n}\sum_{i=1}^n [\a]^i. 
\end{gather} 
This operation is invertible, with inverse $\scom(\tilde \a, \x)$, and has unit Jacobian determinant. 



\begin{minipage}[t]{0.47\textwidth}
\vspace{0pt}  
\begin{algorithm2e}[H]\LinesNotNumbered\small
	\KwData{Zero-CoM observation $\tilde \x$, augmented variable $\a$, Coupling transforms $\gF_1, \gF_2$}
 \vspace{2mm}
    $(\x, \tilde \a) \gets \scom(\tilde \x, \a)$\\
    $(\x, \tilde \a) \gets \gF_{M}^{(1)}\circ\cdots\circ\gF_1^{(1)}(\x, \tilde \a)$ \\
 \vspace{0.46mm}
    $(\a, \tilde \x) \gets \scom(\tilde \a, \x)$\\
    $(\a, \tilde \x) \gets \gF_{M}^{(2)}\circ\cdots\circ\gF_1^{(2)}(\a, \tilde \x)$ \\
 \vspace{2mm}
\KwResult{$\tilde \x, \a$ }
\caption{Flow block $f$}
\label{alg:flow_block}
\end{algorithm2e}

\end{minipage}
\begin{minipage}[t]{0.47\textwidth}
\vspace{0pt} 
\begin{algorithm2e}[H]\LinesNotNumbered\small
	\KwData{$(\x, \a) \sim p$, base density $q_0$, $(f^{(l)})_{l=1}^{L}$}
    $(\tilde \x^{(0)}, \a^{(0)}) \gets (\x - \bar \x, \a - \bar \x)$ \\
     $\mathrm{logdet} \gets 0$ \\
    \For{$l=1, \dotsc, L$}{
    	$(\tilde \x^{(l)}, \a^{(l)}) \gets f^{(l)}(\tilde \x^{(l-1)}, \a^{(l-1)}) $ \\ 
    	$\mathrm{logdet} \gets \mathrm{logdet} + \left| \frac{\partial  f^{(l)}(\tilde \x^{(l-1)}, \a^{(l-1)}) }{\partial (\tilde \x^{(l-1)}, \a^{(l-1)})} \right|$

	}

\KwResult{ $q_0(\tilde \x^{(L)}, \a^{(L)}) + \mathrm{logdet}$ }
\caption{Joint density evaluation }
\label{alg:joint_density}
\end{algorithm2e}

\end{minipage}

\textbf{Putting the building blocks together} \ \ 
Our flow transform is built as a sequence of $L$ blocks. Each block, described in \cref{alg:flow_block}, consists of two equivariant coupling layers, see \cref{fig:method}. Our observations $\tilde \x \in \tX$ are lifted onto $\X$ with $\scom$, they are transformed with $M$ core transformations $\bigl(\gF_i^{(1)}\bigr)_{i=1}^{M}$, and $\scom$ is applied one more time to map the observations back to the zero-CoM hyperplane. After this, our augmented variables $\a$ are transformed with $\bigl(\gF_i^{(2)}\bigr)_{i=1}^{M}$. 

Joint density evaluation $q(\x, \a)$ is performed with \cref{alg:joint_density}. 
We first subtract the observed variables' CoM from both the observed and augmented variables. 
We then apply our $L$ flow transform blocks before evaluating the transformed variables' density under our base distribution $q_0$, which is described next.
The log determinant of the core flow transform, $f$, has a contribution from the projection, transform in the invariant space, and inverse projection (see \cref{app:projections} for details).

\subsection{\texorpdfstring{$\bm{\SE3\times S_n}$}{SE3xSn}  Invariant base distribution}\label{subsec:base_distribution}
\label{sec:inv_base}

Again, we assume $\A = \X$, i.e. $k=1$, with the generalization given in \cref{sec:app_multiple_aug_var}. Our invariant choice of base distribution is  $q_0(\x,\a) = \tilde{\gN}(\x;\, 0, I)~\gN(\a;\, \x, \eta^2 I) $ where $\x \in \R^{3n}$ and $\a \in \R^{3n}$ refer to $\x$ and $\a$ flattened into vectors, $\eta^2$ is a hyperparameter and
we denote Gaussian distributions on $\tilde{\X}$ as $\tilde{\gN}$ \citep{satorras2021Equivariant,yim2023SE} with density
\begin{gather}
    \textstyle
    \tilde{\gN}( \x;\, 0, I) = (2\pi)^{-3(n-1)/2} \exp(-\tfrac{1}{2} \| \tilde \x \|^{2}_{2} ).
\end{gather}
We sample from it by first sampling from a standard Gaussian $\gN(0, I)$ and then removing the CoM.
On the other hand, the distribution for $\a$ is supported on $\A$ which includes non-zero CoM points. It is centered on $\x$, yielding joint invariance to translations.
The isotropic nature of $q_0$ makes its density invariant to rotations, reflections, and permutations \citep{satorras2021Equivariant,yim2023SE}. 

\subsection{Training and likelihood evaluation}
\label{subsec:training_and_densities}
In this section, we discuss learning and density evaluation with augmented variables.

\textbf{Invariant augmented target distribution} \ \
We assume the density of our observations $p$ is $\SE3 \times S_n$ invariant.
Our target for augmented variables is $\pi(\a|\x) = \gN(\a;\, \x, \eta^2 I)$, where $\eta^2$ matches the variance of the base Gaussian density over $\a$.
This satisfies joint invariance  $p(g \cdot \x)\pi(g \cdot \a | g \cdot \x) = p(\x)\pi(\a|\x)$ for any $g \in \SE3 \times S_n$, as shown in \cref{sec:app_invariant}.

\textbf{Learning from samples} \ \
When data samples $\x \sim p$ are available, we train our flow parameters by maximizing the joint likelihood, which is a lower bound on the marginal log-likelihood over observations up to a fixed constant
\begin{gather}
    \textstyle
    \mathbb{E}_{\x \sim p(\x), \a \sim \pi(\a|\x)}[ \log q(\x,\a)] \leq \mathbb{E}_{p(x)}[\log q(\x)] + C.
\end{gather}

\textbf{Learning from energy} \ \
When samples are not available but we can query the unnormalized energy of a state $U(\x)$, with $p(\x) \propto \exp(- U(\x))$, 
we can minimize the joint reverse KL divergence.
By the chain rule of the KL divergence, this upper bounds the KL between marginals
\begin{gather}
    \textstyle
    \KL\left(q(\x,\a)\,||\,p(\x) \pi(\a|\x))\right) \geq \KL\left(q(\x)\,||\, p(\x) \right).
\end{gather}
However, the reverse KL encourages mode-seeking \citep{minka2005divergence} which may result in the model failing to characterize the full set of meta-stable molecular states. 
Therefore, we instead use \emph{flow annealed importance sampling bootstrap} (FAB) \citep{Midgley2022fab}, which targets the mass covering $\alpha$-divergence with $\alpha=2$. In particular, we minimize the $\alpha$-divergence over the joint
which leads to an upper bound on the divergence of the marginals

\begin{gather}
\label{eqn:joint-alpha-div}
    \textstyle
    D_{2}\left(q(\x,\a)\,||\,p(\x) \pi(\a|\x)\right) \triangleq \int \frac{p(\x)^2 \pi(\a | \x)^2}{q(\x, \a)}\, \rmd \a \,\rmd \x \geq  \int \frac{p(\x)^2}{q(\x)}\, \rmd \x  \triangleq D_{2}\left(q(\x)\,||\,p(\x)\right).
\end{gather}
To compute unbiased expectations with the augmented flow we rely on the estimator $\mathbb{E}_{p(\x)}[f(\x)] = \mathbb{E}_{q(\x, \a)}[w(\x, \a)f(\x)]$ where $w(\x, \a) = p(\x)\pi(\a | \x) / q(\x, \a)$.
Minimizing the joint $\alpha$-divergence with $\alpha=2$ corresponds to minimizing the variance in the joint importance sampling weights $w(\x, \a)$, which allows for the aforementioned expectation to be approximated accurately.

\textbf{Evaluating densities} \ \
To evaluate the marginal density of observations we use the importance weighed estimator
$q(\x) =  \mathbb{E}_{\a \sim \pi(\cdot|\x)} \left[\frac{q(\x,\a)}{\pi(\a|\x)} \right]$, 
noting that $\pi$ is Gaussian and thus supported everywhere.
The estimator variance vanishes when $q(\a|\x) = \pi(\a|\x)$, as shown in \cref{sec:Importance_weighed}.

\section{Experiments}
\label{sec:experiments}


\subsection{Training with samples: DW4, LJ13 and QM9 positional}
\label{sec:experiments-ml-pos-only}

First, we consider 3 problems that involve only positional information, with no additional features such as atom type or connectivity. Thus, the target densities are fully permutation invariant. 
The first two of these, namely DW4 and LJ13, are toy problems from \cite{kohler2020equivariant}, where samples are obtained by running MCMC on the 4 particle double well energy function (DW4) and 13 particles Leonard Jones energy function (LJ13) respectively.
The third problem, i.e. QM9 positional \citep{satorras2021Equivariant}, selects the subset of molecules with 19 atoms from the commonly used QM9 dataset \citep{ramakrishnan2014quantum} and discards their node features.

For our model, Equivariant Augmented Coupling Flow (\mbox{\eanf}), we consider all projection types (\vecproj, \cartproj, \sphproj) and compare them to: (1) \noneanf: An augmented flow that is not rotation equivariant but is translation and permutation equivariant, as in \citep{klein2023timewarp}. 
This model uses the same structure as the \eanf\ but replaces the EGNN with a transformer which acts directly on atom positions, without any projection. 
We train \noneanf \ with data-augmentation whereby we apply a random rotation to each sample within each training batch. 
(2) \ecnf\  {\scshape ML}: The SE(3) equivariant continuous normalizing flow from \cite{satorras2021Equivariant} trained by maximum likelihood.
(3) \ecnf\ {\scshape FM}: An SE(3) equivariant continuous normalizing flow trained via flow matching \citep{lipman2023flow,klein2023equivariant}.
(4) \ediff: An SE(3) equivariant diffusion model \citep{Hoogeboom2022Diffusion} evaluated as a continuous normalizing flow. 
All equivariant generative models use the $\mathrm{SE}(3)$ GNN architecture proposed by \cite{Satorras2021GNN}.
The \cartproj\ exhibited numerical instability on QM9-positional causing runs to crash early. 
To prevent these crashes it was trained at a lower learning rate than the other \eanf\ models.
\cref{app:experiments-pos-only} provides a detailed description of these experiments.

In \cref{tab:ml_simple}, we see that on DW4 and LJ13 all equivariant methods (\eanf\, \ecnf, \ediff) are competitive.  
We note both DW4 and LJ13 have biased datasets (see \cref{app:experiments-pos-only}). 
On QM9-positional, the \eanf\ is competitive with \ediff \ and \ecnf\ {\scshape FM}, while the under-trained \ecnf \ {\scshape ML}\ from \cite{satorras2021Equivariant} performs poorly. 
We expect with further tuning \ediff\ and \ecnf\ {\scshape FM} would match the performance of \sphproj \ on QM9-positional.
The \noneanf \ performs much worse than the \eanf, despite being trained for more epochs, demonstrating the utility of in-build equivariance.
Furthermore, \cref{fig:positional-only} shows that the distribution of inter-atomic distances of samples from our flow matches training data well.
Importantly, sampling and density evaluation of the \eanf \ on an A100 GPU takes roughly 0.01 seconds.
For the CNF trained by flow matching (\ecnf \ {\scshape ML}) and score matching (\ediff), sampling 
takes on average 0.2, and 5 seconds respectively.
Thus, the \eanf \ is faster for sampling than the CNF by more than an order of magnitude. 

\begin{table}[h]
    \caption{Negative log-likelihood results for flows trained by maximum likelihood on DW4, LJ13 and QM9-positional. \ecnf\  {\scshape ML}\ results are from \cite{satorras2021Equivariant}. 
    Best results are emphasized in \textbf{bold}. 
    The results are averaged over 3 seeded runs, with the standard error reported as uncertainty. 
    }
    \label{tab:ml_simple}
    \begin{center}
    \small
    \begin{tabular}{l 
    D{,}{\hspace{0.05cm}\pm\hspace{0.05cm}}{4}
    D{,}{\hspace{0.05cm}\pm\hspace{0.05cm}}{4}
    D{,}{\hspace{0.05cm}\pm\hspace{0.05cm}}{4}
    }
    \toprule
        \multicolumn{1}{c}{} & \multicolumn{1}{c}{DW4}  & \multicolumn{1}{c}{LJ13} & \multicolumn{1}{c}{QM9 positional} \\
    \midrule
    \ecnf\ \scshape{ML}& 8.15,N/A & 30.56,N/A & -70.2,N/A \\
    \ecnf\ \scshape{FM} & 8.32,0.06 & 31.05,0.36 & -138.23,0.47 \\
    \ediff & \textbf{8.01},\textbf{0.03} & 31.02,0.12 & -158.30,0.15 \\
    \noneanf & 10.07,0.03 & 33.32,0.34 & -76.76,1.77 \\ 
    \vecproj \ \eanf & 8.69,0.03 & \textbf{30.19},\textbf{0.12} & -152.23,6.44 \\ 
    \cartproj \ \eanf & 8.82,0.08 & 30.89,0.09 & -138.62,0.74 \\ 
    \sphproj \ \eanf & 8.61,0.05 & 30.33,0.16 & \textbf{-165.71},\textbf{1.35} \\ 
    \bottomrule
    \end{tabular}
\end{center}
\end{table}

\begin{figure}[h]
    \centering
    \includegraphics[]{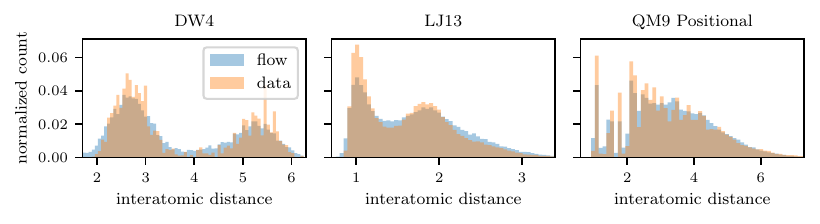}
    \caption{Inter-atomic distances for samples from the train-data and \sphproj \ \eanf.}
    \label{fig:positional-only}
\end{figure}

\subsection{Training with samples: Alanine dipeptide}

Next, we approximate the Boltzmann distribution of alanine dipeptide in an implicit solvent at temperature
$T=\SI{800}{\kelvin}$. 
We train the models via maximum likelihood on samples generated by a replica exchange MD simulation \citep{mori2010}, which serve as a ground truth. Our generated dataset consists of $10^6$ samples in the training and validation set as well as $10^7$ samples in the test set. Besides our \eanf\ using the different projection schemes that we introduced in \cref{subsec:coupling_transform}, we train the non-SO(3) equivariant flow (\noneanf) with data augmentation similarly to the previous experiments. Moreover, we train a flow on internal coordinates as in \cite{Midgley2022fab}. 
The joint effective sample size (ESS) is reported for the \eanf\ models which is a lower bound of the marginal ESS (see \cref{eqn:joint-alpha-div}). 
For the CNFs, the Hutchinson trace estimator is used for the log density \citep{Chen2018FFJORD}. 
This results in a biased estimate of the ESS, which may therefore be spurious.
Further details about model architectures, training and evaluation are given in \cref{sec:app_aldp_exp}.

\begin{figure}
    \centering
    \includegraphics[width=\textwidth]{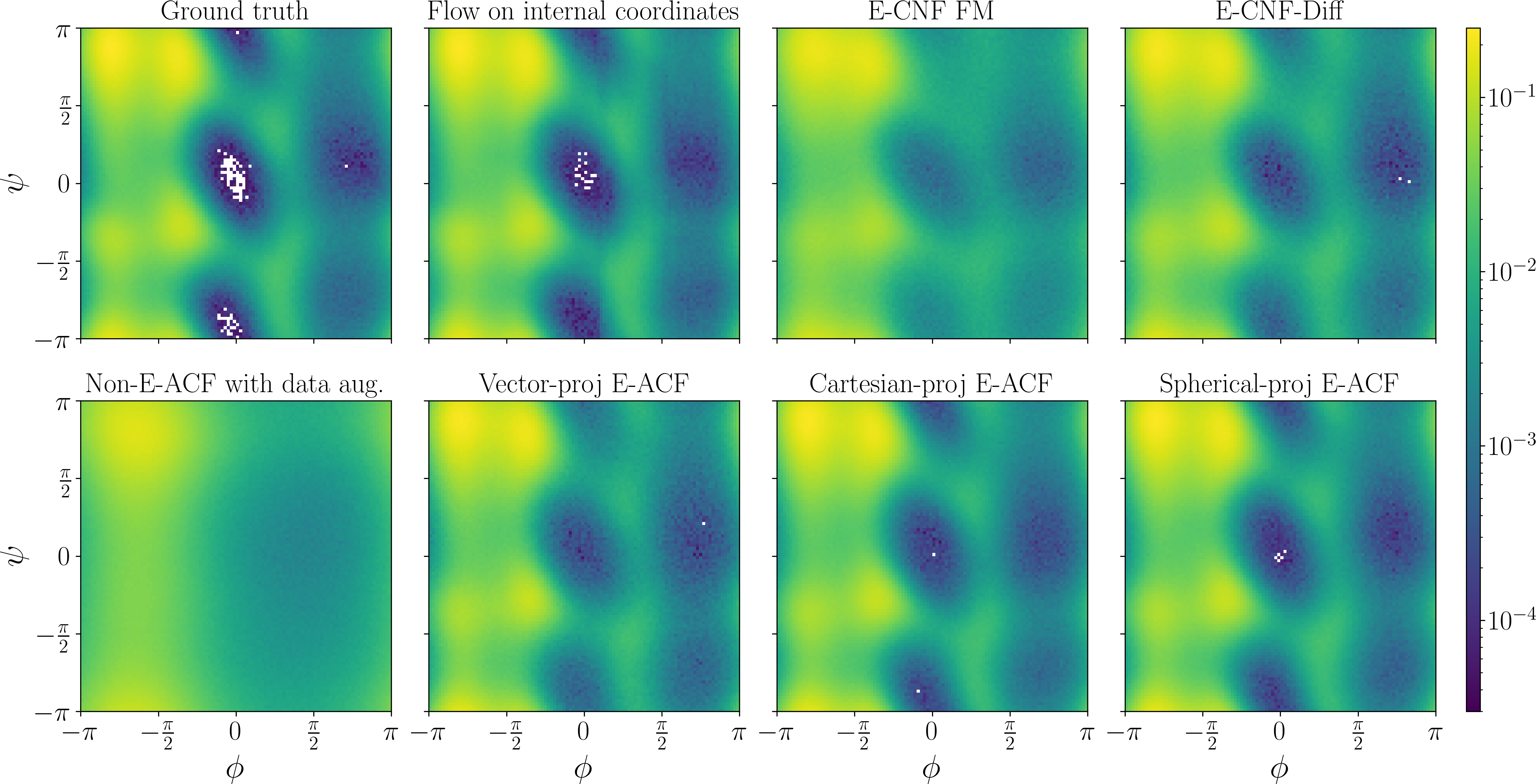}
    \caption{Ramachandran plots, i.e. marginal distribution of the dihedral angles $\phi$ and $\psi$ (see \cref{sec:app_aldp_exp}), obtained with MD (ground truth) and various models.}
    \label{fig:aldp_rm}
\end{figure}

\begin{table}[h]
\small
    \caption{
    Alanine dipeptide results. KLD is the empirical KLD of the Ramachandran plots (see \cref{fig:aldp_rm}). Forward ESS is estimated with the test set. Reverse ESS is estimated with $10^5$ model samples. Results are averaged over 3 seeded runs, with the standard error reported as uncertainty. 
    }
    \label{tab:aldp}
    \begin{center}
    \setlength{\tabcolsep}{1pt}
    \resizebox{\textwidth}{!}{
    \begin{tabular}{l c c c c}
    \toprule
        & KLD & NLL & Rev ESS (\%) & Fwd ESS (\%) \\
    \midrule
    Flow on internal coordinates & $(2.01\pm0.04)\cdot 10^{-3}$ & $-190.15\pm0.02$ & $1.61\pm0.03$ & -- \\
    \hdashline
    \ecnf\ \scshape{FM} & $(3.91\pm0.06)\cdot 10^{-2}$ & $-186.01\pm0.00$ & $(9.8 \pm 0.0)\cdot 10^{-4}$ & $(2.6 \pm 1.5)\cdot 10^{-129}$ \\
    \ediff & $(8.86\pm0.49)\cdot 10^{-3}$ & $-188.31\pm0.01$ & $(8.1 \pm 1.1)\cdot 10^{-4}$ & $(5.1 \pm 4.1)\cdot 10^{-237}$ \\
    \noneanf & $(1.66\pm0.01)\cdot 10^{-1}$ & $-184.57\pm0.35$ & $0.14\pm0.07$ & $(5.5\pm4.5)\cdot 10^{-30}$ \\
    \vecproj \ \eanf & $(6.15\pm1.21)\cdot 10^{-3}$ & $-188.56\pm0.01$ & $19.4\pm13.4$ & $(\mathbf{9.4}\pm\mathbf{7.7})\cdot \mathbf{10^{-7}}$ \\
    \cartproj \ \eanf & $(3.46\pm0.28)\cdot 10^{-3}$ & $\mathbf{-188.59}\pm\mathbf{0.00}$ & $\mathbf{52.5}\pm\mathbf{3.2}$ & $(9.7\pm7.9)\cdot 10^{-9}$ \\
    \sphproj \ \eanf & $(\mathbf{2.55}\pm\mathbf{0.29})\cdot \mathbf{10^{-3}}$ & $-188.57\pm 0.00$ & $\mathbf{48.4}\pm\mathbf{7.2}$ & $(5.0\pm4.1)\cdot 10^{-14}$ \\
    \bottomrule
    \end{tabular}
    }
\end{center}
\end{table}

The results are shown in \cref{fig:aldp_rm} and \cref{tab:aldp}. 
All variants of our \eanf\ clearly outperform the \noneanf, as the Kullback Leibler divergence (KLD) of the Ramachandran plots is significantly lower and the NLL as well as the reverse and forward ESS, see \cref{sec:ess}, are higher.
The flow trained on internal coordinates is only marginally better regarding the NLL and KLD than our best models, i.e.\ the \cartproj\ and \sphproj\ \eanf, while we outperform it considering the ESS.  
Note the flow on internal coordinates explicitly models the angles $\phi$ and $\psi$, while the \eanf\ operates on the underlying Cartesian coordinates.
The \eanf s also outperform the \ediff\ and the \ecnf\ model in all metrics.
However, we expect with further tuning the \ediff\ and \ecnf's performance would match the \eanf s. 
The forward ESS is very low for all models, which suggests the models do not cover some regions in the target distribution, and the reverse ESS is spurious.
Alternatively, this may be from numerical instabilities in the models.
To the best of our knowledge, our models are the first to learn the full Boltzmann distribution of a molecule purely on Cartesian coordinates while being competitive with a flow trained on internal coordinates.
Additionally, we train the best \eanf\ model at $T=\SI{300}{\kelvin}$ where it also performs well (see \cref{sec:app_aldp_exp}).

\subsection{Energy-based training: DW4 and LJ13}
Lastly, we demonstrate that our proposed flow can be trained on the DW4 and LJ13 problems using only the target's unnormalized density with the FAB algorithm \citep{Midgley2022fab}. 
The annealed importance sampling procedure within FAB requires sampling from the flow and evaluating its density multiple times.
This is used within the training loop of FAB making it significantly more expensive per parameter update than training by maximum likelihood.
Given that sampling and density evaluation with CNFs is very expensive, training them with FAB is intractable. 
Thus, we only report results for our flow, as well as for the \noneanf. 
We train the \noneanf \ for more iterations than the \eanf, such that the training times are similar, given that the \noneanf \ is faster per iteration. 
\cref{app:exp-by-energy} provides further details on the experimental setup.

\begin{table}[h]
\small
    \caption{Results for training by energy with FAB. Best results are emphasized in \textbf{bold}. Results are averaged over 3 seeded runs, with the standard error reported as uncertainty. 
    Reverse ESS is estimated with $10^4$ samples from the flow. Forward ESS is estimated with the test sets.}
    \label{tab:fab}
    \begin{center}
    \setlength{\tabcolsep}{2pt}
    \resizebox{\textwidth}{!}{
    \begin{tabular}{l     
    D{,}{\hspace{0.05cm}\pm\hspace{0.05cm}}{5}
    D{,}{\hspace{0.05cm}\pm\hspace{0.05cm}}{5}
    D{,}{\hspace{0.05cm}\pm\hspace{0.05cm}}{5}
    D{,}{\hspace{0.05cm}\pm\hspace{0.05cm}}{5}
    D{,}{\hspace{0.05cm}\pm\hspace{0.05cm}}{5}
    D{,}{\hspace{0.05cm}\pm\hspace{0.05cm}}{5}
    }
    \toprule
        \multicolumn{1}{c}{} & \multicolumn{3}{c}{DW4}  & \multicolumn{3}{c}{LJ13}\\
        \multicolumn{1}{c}{} & \multicolumn{1}{c}{Rev ESS (\%)} & \multicolumn{1}{c}{Fwd ESS (\%)} &  \multicolumn{1}{c}{NLL} & \multicolumn{1}{c}{Rev ESS (\%)} & \multicolumn{1}{c}{Fwd ESS (\%)} & \multicolumn{1}{c}{NLL}  \\
    \midrule
    \noneanf & 35.94,2.63 & 5.45,4.10 & 7.38,0.01 & 5.38,3.66 & 4.14,3.10 & 33.22,0.96 \\ 
    \vecproj \ \eanf & \textbf{84.29},\textbf{0.41} & \textbf{83.39},\textbf{0.79} & \textbf{7.11},0.00 & 59.60,1.13 & 65.20,1.61 & 30.33,0.03 \\ 
    \cartproj \ \eanf & 82.44,0.50 & 80.08,0.64 & 7.13,0.00 & 60.68,0.41 & 65.54,0.37 & 30.34,0.01 \\ 
    \sphproj \ \eanf & 80.44,0.88 & 81.46,0.95 & 7.14,0.00 & \textbf{62.09},\textbf{0.76} & \textbf{66.13},\textbf{0.11} & \textbf{30.21},\textbf{0.02} \\ 
    \bottomrule
    \end{tabular}}
\end{center}
\end{table}

\cref{tab:fab} shows the \eanf \ trained with FAB successfully approximates the target Boltzmann distributions, with reasonably high joint ESS, and NLL comparable to the flows trained by maximum likelihood. 
Additionally, the ESS may be improved further by combining the trained flow with AIS, this is shown in \cref{app:exp-by-energy}. 
In both problems the \noneanf \ performs worse, both in terms of ESS and NLL. 
All models trained by maximum likelihood have a much lower ESS (see \cref{app:experiments-pos-only}). 
This is expected, as unlike the $\alpha=2$-divergence loss used in FAB, the maximum likelihood objective does not explicitly encourage minimizing importance weight variance. 
Furthermore, the flows trained by maximum likelihood use a relatively small, biased training set, which therefore limits their quality.

\section{Discussion and Related Work}
\label{sec:related_work}
Augmented flows have been used for improving expressiveness of Boltzmann generators \citep{kohler2023flowmatch, koehler2023rigid}; however, these models were not equivariant.
\cite{klein2023timewarp} proposed an augmented normalizing flow architecture to provide conditional proposal distributions for MD simulations and use a coupling scheme similar to ours.
However, this model only achieves translation and permutation equivariance and the authors make their flow approximately rotation invariant through data augmentation.
In our experiments,  we found data augmentation to perform significantly worse than intrinsic invariance. 
In principle, \cite{klein2023timewarp}'s model could be made fully invariant by substituting in our flow's projection-based coupling transform.

An alternative to our equivariant flow and equivariant CNFs are the equivariant residual flows proposed in \cite{bose2022equivariant}.   
Alas, residual flows require fixed-point iteration for training and inversion. 
This is expensive and may interact poorly with energy-based training methods such as FAB \citep{Midgley2022fab} which require fast exact sampling and densities.
Furthermore, \cite{bose2022equivariant} found that the spectral normalization required for residual flows did not interact well with the equivariant CNN architecture in their experiments.

There has been recent progress in improving the sampling speed of CNFs/diffusion models \citep{tong2023improving,song2023consistency} and on using these models for sampling from unnormalized densities \citep{vargas2023denoising,zhang2022path,zhang2023diffusion}. 
Thus, in the future CNF/diffusion models trained by energy may prove to be competitive with discrete-time flow based methods.   

The strategy of projection into a local reference frame to enforce equivariance has been successfully employed in existing literature, specifically for protein backbone generation \citep{Jumper2021HighlyAP,yim2023SE}.
Here we have focused on modelling the full set of Cartesian coordinates of a molecule, but an interesting avenue for future work is to extend our framework to other domains, such as modelling rigid bodies, which has applications to protein backbone generation \citep{Jumper2021HighlyAP,yim2023SE} and many-body systems \citep{koehler2023rigid}. 

\paragraph{Limitations}
Although our flow is significantly faster than alternatives such as CNFs, the expensive EGNN forward pass required in each layer of the flow makes it more computationally expensive than flows on internal coordinates. Additionally, we found our flow to be less numerically stable than flows on internal coordinates, which we mitigate via adjustments to the loss, optimizer and neural network  (see \cref{app:projections}, \cref{app:exp-opt}, \cref{app:exp-egnn}). 
Our implementation uses the E(3) equivariant EGNN proposed by \citet{satorras2021Equivariant}. However, recently there have  been large efforts towards developing more expressive, efficient and stable EGNNs architectures \citep{fuch2020se3,batatia2022mace,musaelian2023learning,liao2023equiformer}. Incorporating these into our flow may improve performance, efficiency and stability. This would be especially useful for energy-based training, where the efficiency of the flow is a critical factor.

\section{Conclusion}
\label{sec:discussion}

We have proposed an SE(3) equivariant augmented coupling flow that achieves similar performance to CNFs when trained by maximum likelihood, while allowing for faster sampling and density evaluation by more than an order of magnitude.
Furthermore, we showed that our flow can be trained as a Boltzmann generator using only the target's unnormalized density, on problems where internal coordinates are inadequate due to permutation symmetries, and doing so with a CNF or a diffusion model is computationally intractable.
It is possible to extend our model to learn the Boltzmann distribution of diverse molecules, by conditioning on their molecular graph, which we hope to explore in the future.

\begin{ack}
We thank Gábor Csányi and his group for the helpful discussions.
We thank Kalyan Ramakrishnan for pointing out a bug in the CNF implementation. 
Laurence Midgley and Javier Antorán acknowledge support from Google's TPU Research Cloud (TRC) program and from the EPSRC through the Syntech PhD program.
Vincent Stimper acknowledges the Max Planck Computing and Data Facilities for providing extensive computing resources and support.
Javier Antorán acknowledges support from Microsoft Research, through its PhD Scholarship Programme, and from the EPSRC.
Jos\'e Miguel Hern\'andez-Lobato acknowledges support from a Turing AI Fellowship under grant EP/V023756/1.
Jos\'e Miguel Hern\'andez-Lobato and Emile Mathieu are supported by an EPSRC Prosperity Partnership EP/T005386/1 between Microsoft Research and the University of Cambridge.
This work has been performed using resources provided by the Cambridge Tier-2 system operated by the University of Cambridge Research Computing Service (http://www.hpc.cam.ac.uk) funded by an EPSRC Tier-2 capital grant. It was also supported by the German Federal Ministry of Education and Research (BMBF): Tübingen AI Center, FKZ: 01IS18039B; and by the Machine Learning Cluster of Excellence, EXC number 2064/1 - Project number 390727645.
\end{ack}

\bibliographystyle{plainnat}
\bibliography{bibliography}

\ifdefined\DeclarePrefChars\DeclarePrefChars{'’-}\else\fi
\begin{thebibliography}{52}
\providecommand{\natexlab}[1]{#1}
\providecommand{\url}[1]{\texttt{#1}}
\expandafter\ifx\csname urlstyle\endcsname\relax
  \providecommand{\doi}[1]{doi: #1}\else
  \providecommand{\doi}{doi: \begingroup \urlstyle{rm}\Url}\fi

\bibitem[Batatia et~al.(2022)Batatia, Kovacs, Simm, Ortner, and
  Csanyi]{batatia2022mace}
Ilyes Batatia, David~Peter Kovacs, Gregor N.~C. Simm, Christoph Ortner, and
  Gabor Csanyi.
\newblock {MACE}: Higher order equivariant message passing neural networks for
  fast and accurate force fields.
\newblock In Alice~H. Oh, Alekh Agarwal, Danielle Belgrave, and Kyunghyun Cho,
  editors, \emph{Advances in Neural Information Processing Systems}, 2022.
\newblock URL \url{https://openreview.net/forum?id=YPpSngE-ZU}.

\bibitem[Batzner et~al.(2022)Batzner, Musaelian, Sun, Geiger, Mailoa,
  Kornbluth, Molinari, Smidt, and Kozinsky]{Batzner2022}
Simon Batzner, Albert Musaelian, Lixin Sun, Mario Geiger, Jonathan~P Mailoa,
  Mordechai Kornbluth, Nicola Molinari, Tess~E Smidt, and Boris Kozinsky.
\newblock E(3)-equivariant graph neural networks for data-efficient and
  accurate interatomic potentials.
\newblock \emph{Nature communications}, 13\penalty0 (1):\penalty0 2453, 2022.

\bibitem[Bose et~al.(2022)Bose, Brubaker, and Kobyzev]{bose2022equivariant}
Avishek~Joey Bose, Marcus Brubaker, and Ivan Kobyzev.
\newblock Equivariant finite normalizing flows, 2022.

\bibitem[Campbell et~al.(2021)Campbell, Chen, Stimper, Hernandez-Lobato, and
  Zhang]{campbell2021gradient}
Andrew Campbell, Wenlong Chen, Vincent Stimper, Jose~Miguel Hernandez-Lobato,
  and Yichuan Zhang.
\newblock A gradient based strategy for {H}amiltonian {M}onte {C}arlo
  hyperparameter optimization.
\newblock In \emph{Proceedings of the 38th International Conference on Machine
  Learning}, pages 1238--1248. PMLR, 2021.

\bibitem[Cohen and Welling(2017)]{Cohen2017}
Taco~S. Cohen and Max Welling.
\newblock Steerable cnns.
\newblock \emph{International Conference on Learning Representations}, 2017.

\bibitem[Dinh et~al.(2015)Dinh, Krueger, and Bengio]{Dinh2014NICE}
Laurent Dinh, David Krueger, and Yoshua Bengio.
\newblock {NICE:} non-linear independent components estimation.
\newblock In Yoshua Bengio and Yann LeCun, editors, \emph{3rd International
  Conference on Learning Representations, {ICLR} 2015, San Diego, CA, USA, May
  7-9, 2015, Workshop Track Proceedings}, 2015.
\newblock URL \url{http://arxiv.org/abs/1410.8516}.

\bibitem[Dinh et~al.(2017)Dinh, Sohl{-}Dickstein, and Bengio]{Dinh2017RealNVP}
Laurent Dinh, Jascha Sohl{-}Dickstein, and Samy Bengio.
\newblock Density estimation using real {NVP}.
\newblock In \emph{5th International Conference on Learning Representations,
  {ICLR} 2017, Toulon, France, April 24-26, 2017, Conference Track
  Proceedings}. OpenReview.net, 2017.
\newblock URL \url{https://openreview.net/forum?id=HkpbnH9lx}.

\bibitem[Dormand and Prince(1980)]{dormand1980family}
John~R Dormand and Peter~J Prince.
\newblock A family of embedded runge-kutta formulae.
\newblock \emph{Journal of computational and applied mathematics}, 6\penalty0
  (1):\penalty0 19--26, 1980.

\bibitem[Durkan et~al.(2019)Durkan, Bekasov, Murray, and
  Papamakarios]{Durkan2019spline}
Conor Durkan, Artur Bekasov, Iain Murray, and George Papamakarios.
\newblock Neural spline flows.
\newblock In Hanna~M. Wallach, Hugo Larochelle, Alina Beygelzimer, Florence
  d'Alch{\'{e}}{-}Buc, Emily~B. Fox, and Roman Garnett, editors, \emph{Advances
  in Neural Information Processing Systems 32: Annual Conference on Neural
  Information Processing Systems 2019, NeurIPS 2019, December 8-14, 2019,
  Vancouver, BC, Canada}, pages 7509--7520, 2019.
\newblock URL
  \url{https://proceedings.neurips.cc/paper/2019/hash/7ac71d433f282034e088473244df8c02-Abstract.html}.

\bibitem[Fuchs et~al.(2020)Fuchs, Worrall, Fischer, and Welling]{fuch2020se3}
Fabian~B. Fuchs, Daniel~E. Worrall, Volker Fischer, and Max Welling.
\newblock Se(3)-transformers: 3d roto-translation equivariant attention
  networks.
\newblock \emph{Advances in Neural Information Processing Systems 33}, 2020.

\bibitem[Geiger and Smidt(2022)]{Geiger2022e3nn}
Mario Geiger and Tess~E. Smidt.
\newblock e3nn: Euclidean neural networks.
\newblock \emph{CoRR}, abs/2207.09453, 2022.
\newblock \doi{10.48550/arXiv.2207.09453}.
\newblock URL \url{https://doi.org/10.48550/arXiv.2207.09453}.

\bibitem[Grathwohl et~al.(2018)Grathwohl, Chen, Bettencourt, Sutskever, and
  Duvenaud]{Chen2018FFJORD}
Will Grathwohl, Ricky T.~Q. Chen, Jesse Bettencourt, Ilya Sutskever, and David
  Duvenaud.
\newblock {FFJORD:} free-form continuous dynamics for scalable reversible
  generative models.
\newblock \emph{CoRR}, abs/1810.01367, 2018.
\newblock URL \url{http://arxiv.org/abs/1810.01367}.

\bibitem[Ho et~al.(2020)Ho, Jain, and Abbeel]{ho2020denoising}
Jonathan Ho, Ajay Jain, and Pieter Abbeel.
\newblock Denoising diffusion probabilistic models.
\newblock In \emph{Advances in Neural Information Processing Systems}, 2020.

\bibitem[Hoogeboom et~al.(2022)Hoogeboom, Satorras, Vignac, and
  Welling]{Hoogeboom2022Diffusion}
Emiel Hoogeboom, Victor~Garcia Satorras, Cl{\'{e}}ment Vignac, and Max Welling.
\newblock Equivariant diffusion for molecule generation in 3d.
\newblock In Kamalika Chaudhuri, Stefanie Jegelka, Le~Song, Csaba
  Szepesv{\'{a}}ri, Gang Niu, and Sivan Sabato, editors, \emph{International
  Conference on Machine Learning, {ICML} 2022, 17-23 July 2022, Baltimore,
  Maryland, {USA}}, volume 162 of \emph{Proceedings of Machine Learning
  Research}, pages 8867--8887. {PMLR}, 2022.
\newblock URL \url{https://proceedings.mlr.press/v162/hoogeboom22a.html}.

\bibitem[Huang et~al.(2020)Huang, Dinh, and Courville]{huang2020augmented}
Chin-Wei Huang, Laurent Dinh, and Aaron Courville.
\newblock Augmented normalizing flows: Bridging the gap between generative
  flows and latent variable models.
\newblock \emph{arXiv preprint arXiv:2002.07101}, 2020.

\bibitem[Jing et~al.(2022)Jing, Corso, Chang, Barzilay, and
  Jaakkola]{jing2022torsional}
Bowen Jing, Gabriele Corso, Jeffrey Chang, Regina Barzilay, and Tommi~S.
  Jaakkola.
\newblock Torsional diffusion for molecular conformer generation.
\newblock \emph{CoRR}, abs/2206.01729, 2022.
\newblock \doi{10.48550/arXiv.2206.01729}.
\newblock URL \url{https://doi.org/10.48550/arXiv.2206.01729}.

\bibitem[Jumper et~al.(2021)Jumper, Evans, Pritzel, Green, Figurnov,
  Ronneberger, Tunyasuvunakool, Bates, Z{\'i}dek, Potapenko, Bridgland, Meyer,
  Kohl, Ballard, Cowie, Romera-Paredes, Nikolov, Jain, Adler, Back, Petersen,
  Reiman, Clancy, Zielinski, Steinegger, Pacholska, Berghammer, Bodenstein,
  Silver, Vinyals, Senior, Kavukcuoglu, Kohli, and
  Hassabis]{Jumper2021HighlyAP}
John~M. Jumper, Richard Evans, Alexander Pritzel, Tim Green, Michael Figurnov,
  Olaf Ronneberger, Kathryn Tunyasuvunakool, Russ Bates, Augustin Z{\'i}dek,
  Anna Potapenko, Alex Bridgland, Clemens Meyer, Simon A~A Kohl, Andy Ballard,
  Andrew Cowie, Bernardino Romera-Paredes, Stanislav Nikolov, Rishub Jain,
  Jonas Adler, Trevor Back, Stig Petersen, David~A. Reiman, Ellen Clancy,
  Michal Zielinski, Martin Steinegger, Michalina Pacholska, Tamas Berghammer,
  Sebastian Bodenstein, David Silver, Oriol Vinyals, Andrew~W. Senior, Koray
  Kavukcuoglu, Pushmeet Kohli, and Demis Hassabis.
\newblock {Highly accurate protein structure prediction with AlphaFold}.
\newblock \emph{Nature}, 596\penalty0 (7873):\penalty0 583 -- 589, 2021.

\bibitem[Kingma et~al.(2021)Kingma, Salimans, Poole, and Ho]{kingma2021vdm}
Diederik Kingma, Tim Salimans, Ben Poole, and Jonathan Ho.
\newblock Variational diffusion models.
\newblock In \emph{Advances in Neural Information Processing Systems},
  volume~34, 2021.

\bibitem[Kingma and Ba(2014)]{kingma2014adam}
Diederik~P Kingma and Jimmy Ba.
\newblock Adam: A method for stochastic optimization.
\newblock \emph{arXiv preprint arXiv:1412.6980}, 2014.

\bibitem[Klein et~al.(2023{\natexlab{a}})Klein, Foong, Fjelde, Mlodozeniec,
  Brockschmidt, Nowozin, No{\'e}, and Tomioka]{klein2023timewarp}
Leon Klein, Andrew~YK Foong, Tor~Erlend Fjelde, Bruno Mlodozeniec, Marc
  Brockschmidt, Sebastian Nowozin, Frank No{\'e}, and Ryota Tomioka.
\newblock Timewarp: Transferable acceleration of molecular dynamics by learning
  time-coarsened dynamics.
\newblock \emph{arXiv preprint arXiv:2302.01170}, 2023{\natexlab{a}}.

\bibitem[Klein et~al.(2023{\natexlab{b}})Klein, Krämer, and
  Noé]{klein2023equivariant}
Leon Klein, Andreas Krämer, and Frank Noé.
\newblock Equivariant flow matching.
\newblock \emph{arXiv preprint arXiv:2306.15030}, 2023{\natexlab{b}}.

\bibitem[K{\"{o}}hler et~al.(2020)K{\"{o}}hler, Klein, and
  No{\'{e}}]{kohler2020equivariant}
Jonas K{\"{o}}hler, Leon Klein, and Frank No{\'{e}}.
\newblock Equivariant flows: Exact likelihood generative learning for symmetric
  densities.
\newblock In \emph{Proceedings of the 37th International Conference on Machine
  Learning, {ICML} 2020, 13-18 July 2020, Virtual Event}, volume 119 of
  \emph{Proceedings of Machine Learning Research}, pages 5361--5370. {PMLR},
  2020.
\newblock URL \url{http://proceedings.mlr.press/v119/kohler20a.html}.

\bibitem[Köhler et~al.(2023{\natexlab{a}})Köhler, Chen, Krämer, Clementi,
  and Noé]{Koehler2023}
Jonas Köhler, Yaoyi Chen, Andreas Krämer, Cecilia Clementi, and Frank Noé.
\newblock Flow-matching: Efficient coarse-graining of molecular dynamics
  without forces.
\newblock \emph{Journal of Chemical Theory and Computation}, 19\penalty0
  (3):\penalty0 942--952, 2023{\natexlab{a}}.
\newblock \doi{10.1021/acs.jctc.3c00016}.

\bibitem[Köhler et~al.(2023{\natexlab{b}})Köhler, Chen, Krämer, Clementi,
  and Noé]{kohler2023flowmatch}
Jonas Köhler, Yaoyi Chen, Andreas Krämer, Cecilia Clementi, and Frank Noé.
\newblock Flow-matching: Efficient coarse-graining of molecular dynamics
  without forces.
\newblock \emph{Journal of Chemical Theory and Computation}, 19\penalty0
  (3):\penalty0 942--952, 2023{\natexlab{b}}.
\newblock \doi{10.1021/acs.jctc.3c00016}.
\newblock URL \url{https://doi.org/10.1021/acs.jctc.3c00016}.
\newblock PMID: 36668906.

\bibitem[Köhler et~al.(2023{\natexlab{c}})Köhler, Invernizzi, de~Haan, and
  Noé]{koehler2023rigid}
Jonas Köhler, Michele Invernizzi, Pim de~Haan, and Frank Noé.
\newblock Rigid body flows for sampling molecular crystal structures.
\newblock In \emph{International Conference on Machine Learning},
  2023{\natexlab{c}}.

\bibitem[Liao and Smidt(2023)]{liao2023equiformer}
Yi-Lun Liao and Tess Smidt.
\newblock Equiformer: Equivariant graph attention transformer for 3d atomistic
  graphs.
\newblock In \emph{The Eleventh International Conference on Learning
  Representations}, 2023.
\newblock URL \url{https://openreview.net/forum?id=KwmPfARgOTD}.

\bibitem[Lipman et~al.(2023)Lipman, Chen, Ben-Hamu, Nickel, and
  Le]{lipman2023flow}
Yaron Lipman, Ricky T.~Q. Chen, Heli Ben-Hamu, Maximilian Nickel, and Matthew
  Le.
\newblock Flow matching for generative modeling.
\newblock In \emph{The Eleventh International Conference on Learning
  Representations}, 2023.
\newblock URL \url{https://openreview.net/forum?id=PqvMRDCJT9t}.

\bibitem[Liu et~al.(2022)Liu, Wang, Liu, Lin, Zhang, Oztekin, and
  Ji]{yi2022spherical}
Yi~Liu, Limei Wang, Meng Liu, Yuchao Lin, Xuan Zhang, Bora Oztekin, and
  Shuiwang Ji.
\newblock Spherical message passing for 3d molecular graphs.
\newblock In \emph{The Tenth International Conference on Learning
  Representations, {ICLR} 2022, Virtual Event, April 25-29, 2022}.
  OpenReview.net, 2022.
\newblock URL \url{https://openreview.net/forum?id=givsRXsOt9r}.

\bibitem[Midgley et~al.(2023)Midgley, Stimper, Simm, Sch{\"{o}}lkopf, and
  Hern{\'{a}}ndez{-}Lobato]{Midgley2022fab}
Laurence~Illing Midgley, Vincent Stimper, Gregor N.~C. Simm, Bernhard
  Sch{\"{o}}lkopf, and Jos{\'{e}}~Miguel Hern{\'{a}}ndez{-}Lobato.
\newblock Flow annealed importance sampling bootstrap.
\newblock \emph{{I}nternational {C}onference on {L}earning {R}epresentations},
  2023.

\bibitem[Minka(2005)]{minka2005divergence}
Tom Minka.
\newblock Divergence measures and message passing.
\newblock Technical report, Microsoft Research, 2005.

\bibitem[Mori and Okamoto(2010)]{mori2010}
Yoshiharu Mori and Yuko Okamoto.
\newblock Replica-exchange molecular dynamics simulations for various constant
  temperature algorithms.
\newblock \emph{Journal of the Physical Society of Japan}, 79\penalty0
  (7):\penalty0 074001, 2010.

\bibitem[Musaelian et~al.(2023)Musaelian, Batzner, Johansson, Sun, Owen,
  Kornbluth, and Kozinsky]{musaelian2023learning}
Albert Musaelian, Simon Batzner, Anders Johansson, Lixin Sun, Cameron~J Owen,
  Mordechai Kornbluth, and Boris Kozinsky.
\newblock Learning local equivariant representations for large-scale atomistic
  dynamics.
\newblock \emph{Nature Communications}, 14\penalty0 (1):\penalty0 579, 2023.

\bibitem[Noé et~al.(2019)Noé, Olsson, Köhler, and Wu]{Noe2019boltzman}
Frank Noé, Simon Olsson, Jonas Köhler, and Hao Wu.
\newblock Boltzmann generators: Sampling equilibrium states of many-body
  systems with deep learning.
\newblock \emph{Science}, 365\penalty0 (6457):\penalty0 eaaw1147, 2019.
\newblock \doi{10.1126/science.aaw1147}.
\newblock URL \url{https://www.science.org/doi/abs/10.1126/science.aaw1147}.

\bibitem[Owen(2013)]{mcbook_owen}
Art~B. Owen.
\newblock \emph{Monte Carlo theory, methods and examples}.
\newblock 2013.

\bibitem[Papamakarios et~al.(2021)Papamakarios, Nalisnick, Rezende, Mohamed,
  and Lakshminarayanan]{papamakarios2021normalizing}
George Papamakarios, Eric Nalisnick, Danilo~Jimenez Rezende, Shakir Mohamed,
  and Balaji Lakshminarayanan.
\newblock Normalizing flows for probabilistic modeling and inference.
\newblock \emph{Journal of Machine Learning Research}, 22\penalty0
  (57):\penalty0 1--64, 2021.

\bibitem[Petersen(2012)]{Petersen2012}
Peter Petersen.
\newblock \emph{Linear {A}lgebra}.
\newblock Springer, 2012.

\bibitem[Pollard(2002)]{pollard2002user}
David Pollard.
\newblock \emph{A User's Guide to Measure Theoretic Probability}.
\newblock Cambridge University Press, 2002.

\bibitem[Ramakrishnan et~al.(2014)Ramakrishnan, Dral, Rupp, and
  Von~Lilienfeld]{ramakrishnan2014quantum}
Raghunathan Ramakrishnan, Pavlo~O Dral, Matthias Rupp, and O~Anatole
  Von~Lilienfeld.
\newblock Quantum chemistry structures and properties of 134 kilo molecules.
\newblock \emph{Scientific data}, 1\penalty0 (1):\penalty0 1--7, 2014.

\bibitem[Rezende et~al.(2020)Rezende, Papamakarios, Racani{\`{e}}re, Albergo,
  Kanwar, Shanahan, and Cranmer]{rezende2020}
Danilo~Jimenez Rezende, George Papamakarios, S{\'{e}}bastien Racani{\`{e}}re,
  Michael~S. Albergo, Gurtej Kanwar, Phiala~E. Shanahan, and Kyle Cranmer.
\newblock Normalizing flows on tori and spheres.
\newblock In \emph{Proceedings of the 37th International Conference on Machine
  Learning}, volume 119, pages 8083--8092. {PMLR}, 2020.

\bibitem[Satorras et~al.(2021{\natexlab{a}})Satorras, Hoogeboom, Fuchs, Posner,
  and Welling]{satorras2021Equivariant}
Victor~Garcia Satorras, Emiel Hoogeboom, Fabian Fuchs, Ingmar Posner, and Max
  Welling.
\newblock E(n) equivariant normalizing flows.
\newblock In Marc'Aurelio Ranzato, Alina Beygelzimer, Yann~N. Dauphin, Percy
  Liang, and Jennifer~Wortman Vaughan, editors, \emph{Advances in Neural
  Information Processing Systems 34: Annual Conference on Neural Information
  Processing Systems 2021, NeurIPS 2021, December 6-14, 2021, virtual}, pages
  4181--4192, 2021{\natexlab{a}}.
\newblock URL
  \url{https://proceedings.neurips.cc/paper/2021/hash/21b5680d80f75a616096f2e791affac6-Abstract.html}.

\bibitem[Satorras et~al.(2021{\natexlab{b}})Satorras, Hoogeboom, and
  Welling]{Satorras2021GNN}
Victor~Garcia Satorras, Emiel Hoogeboom, and Max Welling.
\newblock E(n) equivariant graph neural networks.
\newblock In Marina Meila and Tong Zhang, editors, \emph{Proceedings of the
  38th International Conference on Machine Learning, {ICML} 2021, 18-24 July
  2021, Virtual Event}, volume 139 of \emph{Proceedings of Machine Learning
  Research}, pages 9323--9332. {PMLR}, 2021{\natexlab{b}}.
\newblock URL \url{http://proceedings.mlr.press/v139/satorras21a.html}.

\bibitem[Song et~al.(2023)Song, Dhariwal, Chen, and
  Sutskever]{song2023consistency}
Yang Song, Prafulla Dhariwal, Mark Chen, and Ilya Sutskever.
\newblock Consistency models.
\newblock \emph{International Conference on Machine Learning, {ICML} 2023,
  23-29 July 2023, Honolulu, Hawaii, {USA}}, 202:\penalty0 32211--32252, 2023.

\bibitem[Stimper et~al.(2022)Stimper, Sch{\"o}lkopf, and
  Hern{\'a}ndez-Lobato]{Stimper2022}
Vincent Stimper, Bernhard Sch{\"o}lkopf, and Jos{\'e}~Miguel
  Hern{\'a}ndez-Lobato.
\newblock Resampling {B}ase {D}istributions of {N}ormalizing {F}lows.
\newblock In \emph{Proceedings of The 25th International Conference on
  Artificial Intelligence and Statistics}, volume 151, pages 4915--4936, 2022.

\bibitem[Thomas et~al.(2018)Thomas, Smidt, Kearnes, Yang, Li, Kohlhoff, and
  Riley]{thomas2018tensor}
Nathaniel Thomas, Tess Smidt, Steven Kearnes, Lusann Yang, Li~Li, Kai Kohlhoff,
  and Patrick Riley.
\newblock Tensor field networks: Rotation-and translation-equivariant neural
  networks for 3d point clouds.
\newblock \emph{arXiv preprint arXiv:1802.08219}, 2018.

\bibitem[Tong et~al.(2023)Tong, Malkin, Huguet, Zhang, Rector-Brooks, FATRAS,
  Wolf, and Bengio]{tong2023improving}
Alexander Tong, Nikolay Malkin, Guillaume Huguet, Yanlei Zhang, Jarrid
  Rector-Brooks, Kilian FATRAS, Guy Wolf, and Yoshua Bengio.
\newblock Improving and generalizing flow-based generative models with
  minibatch optimal transport.
\newblock In \emph{ICML Workshop on New Frontiers in Learning, Control, and
  Dynamical Systems}, 2023.
\newblock URL \url{https://openreview.net/forum?id=HgDwiZrpVq}.

\bibitem[Vargas et~al.(2023)Vargas, Grathwohl, and Doucet]{vargas2023denoising}
Francisco Vargas, Will~Sussman Grathwohl, and Arnaud Doucet.
\newblock Denoising diffusion samplers.
\newblock In \emph{The Eleventh International Conference on Learning
  Representations}, 2023.
\newblock URL \url{https://openreview.net/forum?id=8pvnfTAbu1f}.

\bibitem[Wu et~al.(2020)Wu, K{\"{o}}hler, and No{\'{e}}]{wu2020stochastic}
Hao Wu, Jonas K{\"{o}}hler, and Frank No{\'{e}}.
\newblock Stochastic normalizing flows.
\newblock In Hugo Larochelle, Marc'Aurelio Ranzato, Raia Hadsell,
  Maria{-}Florina Balcan, and Hsuan{-}Tien Lin, editors, \emph{Advances in
  Neural Information Processing Systems 33: Annual Conference on Neural
  Information Processing Systems 2020, NeurIPS 2020, December 6-12, 2020,
  virtual}, 2020.
\newblock URL
  \url{https://proceedings.neurips.cc/paper/2020/hash/41d80bfc327ef980528426fc810a6d7a-Abstract.html}.

\bibitem[Xiao et~al.(2022)Xiao, Kreis, and Vahdat]{xiao2022Trilemma}
Zhisheng Xiao, Karsten Kreis, and Arash Vahdat.
\newblock Tackling the generative learning trilemma with denoising diffusion
  gans.
\newblock In \emph{The Tenth International Conference on Learning
  Representations, {ICLR} 2022, Virtual Event, April 25-29, 2022}.
  OpenReview.net, 2022.
\newblock URL \url{https://openreview.net/forum?id=JprM0p-q0Co}.

\bibitem[Xu et~al.(2022)Xu, Yu, Song, Shi, Ermon, and Tang]{xu2022geodiff}
Minkai Xu, Lantao Yu, Yang Song, Chence Shi, Stefano Ermon, and Jian Tang.
\newblock {GeoDiff: A Geometric Diffusion Model for Molecular Conformation
  Generation}.
\newblock In \emph{International Conference on Learning Representations}, 2022.

\bibitem[Yim et~al.(2023)Yim, Trippe, Bortoli, Mathieu, Doucet, Barzilay, and
  Jaakkola]{yim2023SE}
Jason Yim, Brian~L. Trippe, Valentin~De Bortoli, Emile Mathieu, Arnaud Doucet,
  Regina Barzilay, and Tommi~S. Jaakkola.
\newblock {SE(3)} diffusion model with application to protein backbone
  generation.
\newblock 202:\penalty0 40001--40039, 2023.

\bibitem[Zhang et~al.(2023)Zhang, Chen, Liu, Courville, and
  Bengio]{zhang2023diffusion}
Dinghuai Zhang, Ricky Tian~Qi Chen, Cheng-Hao Liu, Aaron Courville, and Yoshua
  Bengio.
\newblock Diffusion generative flow samplers: Improving learning signals
  through partial trajectory optimization.
\newblock \emph{arXiv preprint arXiv:2310.02679}, 2023.

\bibitem[Zhang and Chen(2022)]{zhang2022path}
Qinsheng Zhang and Yongxin Chen.
\newblock Path integral sampler: A stochastic control approach for sampling.
\newblock In \emph{International Conference on Learning Representations}, 2022.
\newblock URL \url{https://openreview.net/forum?id=_uCb2ynRu7Y}.

\end{thebibliography}

\clearpage

\appendix

\section{Background}
\subsection{Tensors and representation}
\label{app:representations}
%
We assume a group $(G, \cdot)$, an algebraic structure that consists of a set $G$ with a group operator $\cdot$ that follows the \emph{closure}, \emph{inverse} and \emph{associativity} axioms.

A representation is an invertible linear transformation $\rho(g): V \rightarrow V$, indexed by group elements $g \in G$, and defines how group elements $g$ acts on elements of the vector space $V$.
It satisfies the group structure $\rho(gh)=\rho(g)\rho(h)$.

A \emph{tensor} $T$ is a geometrical object that is acted on by group elements $g \in G$ in a particular way, characterized by its representation $\rho$: $g \cdot T = \rho(g) T$.
Typical examples include scalar with $\rho_{\mathrm{triv}}(g) = 1$ and vectors living in $V$ with $\rho_{\mathrm{Id}}(g) = g$.
Higher order tensors which live in ${V \times \dots \times V}$ can be constructed as $g \cdot T = (\rho_{\mathrm{Id}}(g) \otimes \dots \otimes \rho_{\mathrm{Id}}(g)) ~\text{vec}(T)$. 
A collection of tensors $(T_1, \dots, T_n)$ with representations $(\rho_1, \dots, \rho_n)$ can be stacked with ﬁber representation as the direct sum $\rho = \rho_1 \oplus \dots \oplus \rho_n$.

\subsection{Translational invariant densities}
\label{app:trans_inv_dens}

\begin{proposition}[Disintegration of measures for translation invariance~\citep{pollard2002user,yim2023SE}]
  \label{prop:translation_invariant_measures}
  Under mild assumptions, for every $\SE3$-invariant measure $\mu$ on $\R^{3 \times n}$, there exists $\eta$ an $\SO3$-invariant probability measure on $\{\R^{3 \times n} | \sum_{i=1}^n x_i = 0\}$ and $\bar{\mu}$ proportional to the Lebesgue measure on $\R^3$ s.t.\
      $\rmd \mu ([x_1, \dots, x_n]) = \rmd \bar{\mu}(\tfrac{1}{n} \sum_{i=1}^n x_i)
      \times \rmd {\eta} ([x_1 - \tfrac{1}{n} \sum_{i=1}^n x_i, \dots, x_n - \tfrac{1}{n} \sum_{i=1}^n x_i]).$
\end{proposition}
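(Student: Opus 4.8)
The plan is to split off the center of mass by a linear change of coordinates, so that translations act only on an $\R^3$ factor, and then to invoke uniqueness of translation-invariant measures (Haar measure on $\R^3$ is Lebesgue). Concretely, first I would introduce the linear isomorphism $\Phi:\R^{3\times n}\to\R^3\times\tX$, $\Phi([x_1,\dots,x_n]) \triangleq \big(\bar x,\,[x_1-\bar x,\dots,x_n-\bar x]\big)$ with $\bar x \triangleq \tfrac1n\sum_{i=1}^n x_i$ and $\tX \triangleq \{[y_1,\dots,y_n] : \sum_i y_i = 0\}$, whose inverse is $\Phi^{-1}(w,[y_1,\dots,y_n]) = [y_1+w,\dots,y_n+w]$. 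A direct computation shows that under $\Phi$ the $\SE3$ action becomes $(R,u)\cdot(w,y) = (Rw+u,\,[Ry_1,\dots,Ry_n])$: translations act on the $\R^3$ factor only (and trivially on $\tX$), while rotations act diagonally and preserve the Lebesgue measure $\lambda_{\R^3}$.

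Next I would push $\mu$ forward to $\nu \triangleq \Phi_*\mu$ on $\R^3\times\tX$, which by the previous step is invariant under translations of the first coordinate and under the diagonal rotation action. Under the stated mild regularity hypotheses (e.g.\ $\sigma$-finiteness, so that the disintegration theorem applies on this standard Borel space), I would disintegrate $\nu$ over the projection $\pi_{\tX}:\R^3\times\tX\to\tX$, writing $\eta' \triangleq (\pi_{\tX})_*\nu$ and $\nu = \int_{\tX}\nu_y\,\eta'(\rmd y)$ for a family $(\nu_y)$ of measures on $\R^3$. Since translating the first coordinate changes neither $\nu$ nor its $\tX$-marginal $\eta'$, uniqueness of the disintegration forces $\nu_y$ to be translation-invariant for $\eta'$-a.e.\ $y$; because a $\sigma$-finite translation-invariant measure on $\R^3$ is a nonnegative scalar multiple of $\lambda_{\R^3}$, after absorbing that ($\eta'$-a.e.\ constant) multiple into $\eta'$ we obtain the product form $\nu = \lambda_{\R^3}\otimes\eta'$.

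It then remains to identify the two factors. Applying a rotation $R$ to $\nu=\lambda_{\R^3}\otimes\eta'$ and using that rotations preserve $\lambda_{\R^3}$ and act as a product gives $\lambda_{\R^3}\otimes(R_*\eta') = \lambda_{\R^3}\otimes\eta'$; testing on rectangles $A\times B$ with $0<\lambda_{\R^3}(A)<\infty$ yields $R_*\eta' = \eta'$, so $\eta'$ is $\SO3$-invariant. Finally, assuming $0<\eta'(\tX)<\infty$ (part of the mild assumptions), set $\eta \triangleq \eta'/\eta'(\tX)$, an $\SO3$-invariant probability measure on $\tX$, and $\bar\mu \triangleq \eta'(\tX)\,\lambda_{\R^3}$, proportional to the Lebesgue measure on $\R^3$. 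Pulling $\nu = \bar\mu\otimes\eta$ back through $\Phi$ gives $\mu = \Phi^{-1}_*(\bar\mu\otimes\eta)$, which unwinds to the claimed identity $\rmd\mu([x_1,\dots,x_n]) = \rmd\bar\mu\big(\tfrac1n\sum_i x_i\big)\times\rmd\eta\big([x_1-\tfrac1n\sum_i x_i,\dots,x_n-\tfrac1n\sum_i x_i]\big)$.

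The conceptual core --- splitting off the CoM and recognising Lebesgue measure by its translation-invariance --- is short; the work is in the measure-theoretic hygiene packaged into ``mild assumptions'': that the disintegration theorem is applicable (this is where $\sigma$-finiteness and the standard-Borel structure of $\R^{3\times n}$ enter), that translation-invariance genuinely descends to $\eta'$-almost-every fibre, and that the total mass $\eta'(\tX)$ is finite and positive so the normalisation into a probability measure is legitimate. I expect this last bookkeeping, rather than any single hard step, to be the main thing to get right.
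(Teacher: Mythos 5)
The paper itself gives no proof of this proposition --- it is imported from \citet{pollard2002user} and \citet{yim2023SE} --- so the only comparison available is with the standard argument, and your overall route (split off the CoM via the linear isomorphism $\Phi$, identify the translation factor with Lebesgue measure by uniqueness of Haar measure, deduce $\SO3$-invariance of the zero-CoM factor, pull back) is exactly that argument. The equivariance computation for $\Phi$, the rectangle test giving $R_*\eta'=\eta'$, and the final pullback are all fine.

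There is, however, a concrete gap in the middle step: the disintegration of $\nu=\Phi_*\mu$ over its $\tX$-marginal is ill-posed, because that marginal is never $\sigma$-finite unless $\mu=0$. Indeed, for any measurable $B\subseteq\tX$, translation invariance in the first coordinate forces $\eta'(B)=\nu(\R^3\times B)\in\{0,\infty\}$ (if some bounded slab $A_0\times B$ has positive mass, countably many disjoint translates of it already give infinite mass), and in particular $\eta'(\tX)=\mu(\R^{3\times n})=\infty$ for every nonzero translation-invariant $\mu$ --- the same observation the paper uses to rule out a translation-invariant probability measure. So the disintegration theorem you invoke does not apply to $\eta'$, and your later hypothesis ``$0<\eta'(\tX)<\infty$'' contradicts what $\eta'$ is; the finiteness assumption has to be placed elsewhere. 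The repair is standard: either restrict $\nu$ to $A_0\times\tX$ for a fixed $A_0$ with $0<\lambda_{\R^3}(A_0)<\infty$ and disintegrate that $\sigma$-finite measure, or avoid disintegration altogether by noting that for each fixed $B$ the map $A\mapsto\nu(A\times B)$ is a translation-invariant, locally finite measure on $\R^3$, hence equals $\eta(B)\,\lambda_{\R^3}(A)$ with $\eta(B)\triangleq\nu(A_0\times B)/\lambda_{\R^3}(A_0)$; countable additivity of $\eta$ is immediate, and $\nu=\lambda_{\R^3}\otimes\eta$ follows on rectangles and then by a $\pi$-system uniqueness argument. The ``mild assumptions'' then amount to local finiteness of $\mu$ and to $0<\eta(\tX)<\infty$, i.e.\ that $\mu$ assigns finite positive mass to a slab $\{\x:\bar{\x}\in A_0\}$, so that $\eta$ can be normalized to the probability measure in the statement. (Your parenthetical claim that the fibre constant is $\eta'$-a.e.\ constant is also unjustified, but harmless: a measurable density can be absorbed just as well.)
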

The converse is also true, which motivates constructing an $\SO3$-invariant probability measure on the subspace of $\R^{3\times n}$ with zero center of mass.

\section{Method}

\subsection{Transforming variables on a subspace}\label{app:subspace_jac}

This section shows that a flow transform $\gF: \gX \to \gX$ applied to observations $\tilde \x$ embedded in the zero-CoM subspace $\tilde \gX$ does not have a well-defined Jacobian.
 We will show this by constructing such a transform, which leaves zero-CoM input unchanged, but for which the log-determinant is arbitrary.

Without loss of generality, we will assume in this subsection and the next that our observed and augmented variables are one dimensional, i.e., $\gX = \gA = \R^n$. 
 Consider the orthonormal basis $V \in \R^{n \times n}$ constructed by concatenating orthonormal basis vectors $\v_i \in \R^{n}$ as $[\v_{1},\dotsc, \v_n]^T$. We choose $\v_1$ to be equal to $[1,1,\dotsc,1]^T / \sqrt{n}$ such that inner products with $\v_1$ retrieve the CoM of a system of particles up to multiplication by a scalar: $\v_{1}^T \x = \sqrt{n} \bar{\x}$ and $\v_{1}^T \tilde \x = 0$. Let $\gF(\x) = V^T S V \x$ be a mapping which rotates onto the basis defined by $V$, multiplies by the diagonal matrix $S = \text{diag}([s, 1, \dotsc, 1])$ and then rotates back to the original space with $V^T$. This transformation only acts on the CoM. Applying this transformation to a zero-CoM variable $\tilde \x$ we have 
 \begin{gather*}
     \gF(\tilde \x) = V^T S V \tilde \x = V^T [s \v_{1}^T \tilde \x, \v_{2}^T \tilde \x, \dotsc, \v_{n}^T \tilde \x] = V^T [0, \v_{2}^T \tilde \x, \dotsc, \v_{n}^T \tilde \x] = \tilde \x,
 \end{gather*}
leaving the zero CoM input unchanged. However, the log-determinant is 
 \begin{gather*}
     \log \left| \frac{\partial \gF(\x)}{\partial \x} \right| = \log \left| V^T S V \right| = \log | V^T| + \log | S | + \log |V | = \log s,
 \end{gather*}
since for the unitary matrices $|V| = |V^T| = 1$. 
Because $s$ is arbitrary, so is the density returned by any flow applying such a layer to zero-CoM variables.
In the maximum likelihood setting, the aforementioned flow would lead to a degenerate solution of $s \rightarrow 0$ such that the log determinant would approach negative infinity and the log likelihood would approach infinity. 

While our flow does use transforms which operate on non-CoM variables, we lift our CoM observations to the non-zero-CoM space before applying said transformations. We do this using the $\scom$ operation, described in \cref{sec:method} and below.

\subsection{Jacobian of $\scom$}\label{app:scom_jac}

We now show our $\scom: \tX \times \gA \mapsto \gX \times \tA$ transformation to have unit Jacobian log determinant.
We first, re-state the function definition
\begin{gather*}
	\scom(\tilde \x, \a) \triangleq (\tilde \x - \bar{\a},\, \a - \bar{\a}) \quad \textrm{with} \quad \bar{\a} \triangleq \frac{1}{n}\sum_{i=1}^n [\a]^i. 
\end{gather*} 
We now re-write it using the orthonormal basis $V \in \R^{n \times n}$ defined in the previous subsection. For this, we use $I_2 \otimes V \in \R^{2n \times 2n}$ to refer to a $2\times 2$ block diagonal matrix with $V$ in both blocks, and we also stack the inputs and outputs of $\scom$, yielding $\scom([\tilde \x, \a]) = (I_2 \otimes V)^T P (I_2 \otimes V) [\tilde \x, \a]$,
where $P$ is a permutation matrix that exchanges the first and $n+1$th elements of the vector it acts upon. It also flips this element's sign. 
To see this note
\begin{align*}
    \scom([\tilde \x, \a]) &= (I_2 \otimes V)^T P (I_2 \otimes V) [\tilde \x, \a]
    \\
    &= (I_2 \otimes V)^T P [0, \v_2^T \tilde \x,\dotsc,\v_n^T \tilde \x, \v_1^T\a, \dotsc, \v_n^T\a]
    \\
    & = (I_2 \otimes V)^T [-\v_1^T\a, \v_2^T \tilde \x,\dotsc,\v_n^T \tilde \x, 0, \v_2^T\a, \dotsc, \v_n^T\a]
    \\
    & = [\tilde \x - \bar \a, \a - \bar \a].
\end{align*}
For the determinant we use that $|I_2|=1$ and $|V|=1$, and thus the determinant of their Kronecker product will be equal to one $|(I_2 \otimes V)|=1$. Combining this with the fact that permutation matrices and matrices that flip signs have unit determinant, we arrive at
\begin{align*}
 \log \left| \frac{\partial \scom([\tilde \x, \a])}{\partial [\tilde \x, \a]} \right| &= \log \left|(I_2 \otimes V)^T P (I_2 \otimes V) \right| \\
 &= \log | (I_2 \otimes V)^T| + \log | P | + \log |I_2 \otimes V | =1.
\end{align*}
It is worth noting that $\scom$ preserves the inherent dimensionality of its input; both its inputs and outputs consist of one zero-CoM vector and one non-zero-CoM vector. Because $\scom$ does not apply any transformations to the CoM, it does not suffer from the issue of ill-definednes discussed in the previous subsection.

\subsection{Projection variants and their respective transformations}
\label{app:projections}

In this section we provide a detailed description of the core transformation of our equivariant coupling flow layer, see \cref{fig:method}. It consists of a projection into an invariant space via an equivariant reference, followed by a flow transformation in this space and a back-projection.

We denote the inputs with superscript $\ell$ and use $\ell+1$ for outputs to define our transformation block
$\gF: (\x^{\ell}; \a^{\ell}) \mapsto (\x^{\ell+1}, \a^{\ell+1})$. Without loss of generality, we assume that we are transforming $\x^{\ell}$. Then, $\gF$ takes the following form.
\begin{equation}
	\begin{split}
		\x^{\ell+1} &= \gamma^{-1}_{\bm{r}} \cdot \tau_{\theta} ( \gamma_{\bm{r}} \cdot \x^{\ell}  ), \,\,\,\,\,\,\, \text{with} \ (\bm{r}, \theta) = h(\a^{\ell}) ,\\
		\a^{\ell+1} &= \a^{\ell}.  
	\end{split}
    \label{eq:equivariant_coupling_app}
\end{equation}
Here, $h$ is a (graph) neural network that returns a set of $u$ equivariant reference vectors $\bm{r} = \left[r^1, \dots, r^u\right]$, and invariant parameters $\theta$.
The equivariant vectors $\bm{r}$ parametrize $\gamma_{\bm{r}}$, a projection operator onto a rotation invariant feature space. This means that any rotations applied to the inputs will be cancelled by $\gamma_{\bm{r}}$, i.e. $ \gamma_{g \cdot \bm{r}} = \gamma^{-1}_{ \bm{r}} \cdot g^{-1}$ or equivalently $(\gamma_{g \cdot \bm{r}})^{-1} =  g \cdot \gamma^{-1}_{ \bm{r}}$ for all $g \in \SO3$.

In the next paragraphs, we discuss three projection operations and their respective flow transformations.

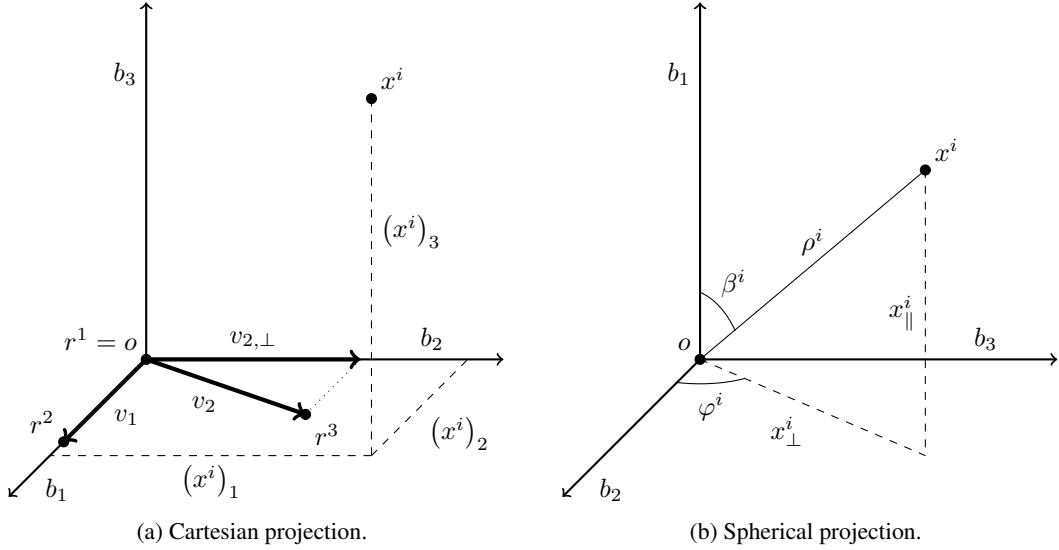
\begin{figure}[h]
    \centering
    \subfloat[Cartesian projection. \label{fig:cart_proj}]{%
    \begin{tikzpicture}[
            scale=0.95,
			arr/.style={->,thick},
			varr/.style={->,ultra thick},
			dot/.style={radius=0.7mm},
			node distance=50mm
		]
		\draw[arr] (0,0,0) -- (0,0,5);
		\draw[arr] (0,0,0) -- (5,0,0);
		\draw[arr] (0,0,0) -- (0,5,0);
		\node[below right] at (0,0,4) {$b_1$};
		\node[above] at (4,0,0) {$b_2$};
		\node[left] at (0,4,0) {$b_3$};
		
		\draw[fill] (0,0,0) circle[dot];
		\node[above left] at (0,0,0) {$r^1 = o$};
		\draw[fill] (0,0,3) circle[dot];
		\node[above left] at (0,0,3) {$r^2$};
		\draw[varr] (0,0,0) -- (0,0,3);
		\node[below right] at (0,0,1.5) {$v_1$};
		\draw[fill] (3,0,2) circle[dot];
		\node[below right] at (3,0,2) {$r^3$};
		\draw[varr] (0,0,0) -- (3,0,2);
		\node[below left] at (1.5,0,1) {$v_2$};
		\draw[varr] (0,0,0) -- (3,0,0);
		\node[above] at (1.5,0,0) {$v_{2,\bot}$};
		\draw[dotted] (3,0,0) -- (3,0,2);
		
		\draw[fill] (4.5,5,3.5) circle[dot];
		\node[above right] at (4.5,5,3.5) {$x^i$};
		\draw[dashed] (4.5,5,3.5) -- (4.5,0,3.5);
		\draw[dashed] (4.5,0,3.5) -- (0,0,3.5);
		\draw[dashed] (4.5,0,3.5) -- (4.5,0,0);
		\node[below] at (2.25,0,3.5) {$\left(x^i\right)_1$};
		\node[below right] at (4.5,0,1.75) {$\left(x^i\right)_2$};
		\node[below right] at (4.5,3.5,3.5) {$\left(x^i\right)_3$};
	\end{tikzpicture}%
    }
    \hfill
    \subfloat[Spherical projection. \label{fig:spher_proj}]{%
    \begin{tikzpicture}[
            scale=0.95,
			arr/.style={->,thick},
			varr/.style={->,ultra thick},
			dot/.style={radius=0.7mm},
			node distance=50mm
		]
		\draw[arr] (0,0,0) -- (0,0,5);
		\draw[arr] (0,0,0) -- (5,0,0);
		\draw[arr] (0,0,0) -- (0,5,0);
		\node[below right] at (0,0,4) {$b_2$};
		\node[above] at (4,0,0) {$b_3$};
		\node[left] at (0,4,0) {$b_1$};
		
        \draw[fill] (0,0,0) circle[dot];
		\node[above left] at (0,0,0) {$o$};
		\draw[fill] (4.5,4,3.5) circle[dot];
		\node[above right] at (4.5,4,3.5) {$x^i$};
		
		\draw (0,0,0) -- (4.5,4,3.5);
		\node[above] at (2.25,2,1.75) {$\rho^i$};
		\draw[dashed] (4.5,4,3.5) -- (4.5,0,3.5);
		\node[left] at (4.5,2,3.5) {$x^i_\parallel$};
		\draw[dashed] (0,0,0) -- (4.5,0,3.5);
		\node[below left] at (2.25,0,1.75) {$x^i_\bot$};
		\tdplotsetmaincoords{70}{110};
		\tdplotdrawarc{(0,0,0)}{1}{0}{58}{anchor=north}{$\varphi^i$};
		\tdplotsetthetaplanecoords{60};
		\tdplotdrawarc[tdplot_rotated_coords]{(0,0,0)}{1}{0}{50}
		{anchor=south west}{\hspace{-1mm}$\beta^i$};
	\end{tikzpicture}%
    }
    \caption{Illustration of the Cartesian and spherical projection.}
    \label{fig:cart_spher_proj}
\end{figure}

\textbf{Cartesian projection}\ \ 
Here, we predict a 3D Cartesian coordinate system as illustrated in \cref{fig:cart_proj}. The coordinate system has an origin $o=r^1$ and an orthonormal basis $Q$, both being equivariant. $Q$ is constructed from two equivariant vectors $v_1=r^2 - r^1$ and $v_2=r^3 - r^1$ in the following way. To obtain the first basis vector, we simply normalize $v_1$, i.e. $b_1 = \frac{v_1}{\lVert v_1\rVert}$. We get the second basis vector via a Gram-Schmidt \citep{Petersen2012} iteration, i.e. we compute the component of $v_2$ orthogonal to $v_1$, which is  $v_{2,\bot} = v_2 - b_1^\top v_2$, and then normalize it to obtain $b_2 = \frac{v_{2,\bot}}{\lVert v_{2,\bot} \rVert}$. Note that this is ill-posed if $v_1$ and $v_2$ are collinear. 
To avoid this case, we introduce an auxiliary loss,
\begin{gather}
\label{eqn:aux-loss}
   \operatorname{aux-loss} = - \log \left(\epsilon + \arccos (\operatorname{abs} (v_1^T v_2 /  \lVert v_1 \rVert / \lVert v_2 \rVert)) \right),
\end{gather}
which uses a logarithmic barrier function on the angle between the vectors to prevent them from being collinear. 
$\epsilon$ is a small constant that prevents the $\operatorname{aux-loss}$ from being equal to infinity.
Finally, the third basis vector is $b_3 = b_1 \times b_2$, and we can assemble $Q = \left[b_1, b_2, b_3\right]$.

The projection into this coordinate system and the respective inverse operation are given by
\begin{align}
    \gamma_{\bm{r}}(\x) &= Q^{-1} (\x - o)), \\
    \gamma_{\bm{r}}^{-1}(\x_\text{p}) &= Q \x_\text{p} + o,
\end{align}
where $\x_\text{p}$ is the transformed projected variable.

In the projected space, we can apply any standard flow transform, as long as it acts on each particle individually. We choose $\tau_\theta$ to an affine transformation as in the Real NVP architecture \citep{Dinh2017RealNVP}.

Note that since $b_3$ is constructed via a cross product, it is not parity equivariant, which is why the entire transformation block is not parity equivariant either. 
However, since $\mathrm{sign}(b_3^\top r^2)$ is a pseudo-scalar, we can replace $b_3$ with $b_3' = \mathrm{sign}(b_3^\top r^2) b_3$, which is a vector, to retain parity equivariance.

The determinant of the Jacobian of $\gF$ when using the Cartesian projection is just the determinant of the Jacobian of $\tau_\theta$, which we show below.
\begin{align*}
\left| \frac{\partial \gF(\x; \a)}{\partial (\x,\a)} \right| 
&= \left| \frac{\partial \gF(\x; \a)|_{\x}}{\partial \x} \right| \cdot \left| \frac{\partial \gF(\x; \a)|_{\a}}{\partial \a} \right| \\
&= \left| \left[\frac{\partial \gamma_{\bm{r}}^{-1}(\bm{z})}{\partial \bm{z}}\right]_{\bm{z}=\tau_\theta(\gamma_{\bm{r}}(\x))} \left[\frac{\partial \tau_\theta(\bm{z})}{\partial \bm{z}}\right]_{\bm{z}=\gamma_{\bm{r}}(\x)} \frac{\partial \gamma_{\bm{r}}(\x)}{\partial \x} \right| \cdot 1 \\
&= \left|Q \left[\frac{\partial \tau_\theta(\bm{z})}{\partial \bm{z}}\right]_{\bm{z}=\gamma_{\bm{r}}(\x)} Q^{-1} \right| \\
&= |Q| \left|\left[\frac{\partial \tau_\theta(\bm{z})}{\partial \bm{z}}\right]_{\bm{z}=\gamma_{\bm{r}}(\x)} \right| |Q^{-1}|\\
&= \left|\left[\frac{\partial \tau_\theta(\bm{z})}{\partial \bm{z}}\right]_{\bm{z}=\gamma_{\bm{r}}(\x)} \right|.   
\end{align*}

\textbf{Spherical projection}\ \ 
For the spherical projection, which is illustrated in \cref{fig:spher_proj}, we compute an origin $o$ and three Cartesian basis vectors $b_1$, $b_2$, and $b_3$ as in the Cartesian projection. Instead of computing the Cartesian coordinates of each $x^i$, we compute the spherical coordinates $(\rho,\beta,\varphi)$. 
Hence, for the $i$-th atom the projection operation and its inverse is given by
\begin{align}
    \left(\gamma_{\bm{r}}(\x)\right)^i &= \left(\lVert x^i \rVert, \arccos\left(\frac{x^i_\parallel}{\lVert x^i \rVert}\right), \mathrm{atan2}\left(b_3^\top x^i_\bot,b_2^\top x^i_\bot\right)\right)^\top, \\
    \left(\gamma_{\bm{r}}^{-1}(\x_\text{p})\right)^i &= \rho^i\left(\cos(\beta^i)b_1 + \sin(\beta^i)\left(\cos(\varphi^i)b_2 + \sin(\varphi^i)b_3\right)\right),
\end{align}
Here, we used $x^i_\parallel=b_1^\top x^i$, $x^i_\bot=x^i-x^i_\parallel$, and $x_\text{p}^i = (\rho^i,\beta^i,\varphi^i)$ for brevity.

The spherical coordinates are transformed with monotonic rational-quadratic splines \citep{Durkan2019spline}. The radial coordinate $\rho$ is transformed through a spline with the interval starting at 0 to ensure that the transformed coordinate is also non-negative to ensure invertibility. The two angles are transformed via circular splines \citep{rezende2020}.
Similarly to the Cartesian projection, we use the auxiliary loss (\cref{eqn:aux-loss}) to prevent the vectors output from the GNN from being co-linear. 

When using the spherical projection, we compute the log-determinant of the Jacobian of $\gF$ by using the chain rule in the same way as we do for standard normalizing flows. The Jacobian determinant of the projection operation is given by the well-known term
\begin{equation}
    \left| \frac{\partial \left(\gamma_{\bm{r}}(\x)\right)^i}{\partial \x} \right| = \left(\rho^i\right) ^2 \sin \beta^i.
\end{equation}
For the inverse, we can just raise this term to the power of $-1$.

\textbf{Vector projection}\ \ 
This is a special case of the spherical projection, where we only transform the radial coordinate but leave the angles the same. Since in practice we only need to compute $\rho$, we call this case vector projection. Note that just predicting the origin $o$ as a reference with the equivariant neural network is sufficient here.
Unlike the Cartesian and spherical projections, the vector projection does not rely on the cross product, and hence is parity equivariant by default.

\subsection{Multiple augmented variables}
\label{sec:app_multiple_aug_var}

Consider augmented variables consisting of multiple sets of observation-sized arrays, i.e. $\A = \X^k$ with $k>1$.
For the base distribution, and also the target, we deal with multiple augmented variables by letting them follow conditional normal distributions centered at $\x$. We thus have $\pi(\a|\x) = \prod_{i=1}^k \gN(\a_{i};\, \x, \sigma^2 I)$ and $q_0(\x, \a) = \tilde{\gN}(\x;\, 0, I) \prod_{i=1}^k \gN(\a_{i};\, \x, \sigma^2 I)$.

For our flow block, we group our variables into two sets $(  \tilde \x, \a_{1}, \dotsc, \a_{(k-1) / 2}  )$  and $(\a_{(k-1) / 2 + 1},\dotsc,\a_k  )$, and update one set conditional on the other, and vice-versa, while always ensuring the arrays being transformed have non-zero CoM.

Initially, $\a$ is non-zero-CoM with each augmented variable $\a_i$ having a CoM located at a different point in space.
Before the first coupling transform $\gF_{M}^{(1)}$, the $\scom$ function applies the mapping 
\begin{gather}
    \textstyle
	\scom(\tilde \x, \a) \triangleq (\tilde \x - \bar{\a}_{1 + \frac{k-1}{2} },\, \a - \bar{\a}_{1 + \frac{k-1}{2}}) \quad \textrm{with} \quad \bar{\a}_{1 + \frac{k-1}{2} } \triangleq \frac{1}{n}\sum_{i=1}^n [\a_{1 + \frac{k-1}{2}}]^i. 
\end{gather} 
taking $\a_{(k-1)/2 + 1}$ to the zero-COM hyper-plane, while $\x$ and every other augmented variable remain non-zero-CoM.
Before the second coupling transform $\gF_{M}^{(2)}$, we apply the $\scom$ maps in reverse, taking $\x$ back to the zero-COM hyper-plane.

The coupling transforms $\gF_{M}^{(1)}$ and $\gF_{M}^{(2)}$ work as previously, except now they transform $(k+1) / 2$ sets of variables (holding the other set constant), where for each of the $k$ variables, each of the $n$ nodes get their own local reference frame and inner flow transform $\tau_{\theta}$.

\subsection{Invariant base and target distributions}
\label{sec:app_invariant}

We now show the $\SE3 \times S_n$ invariance of our extended base distribution choice, which trivial extends to our choice of augmented target distribution. 
Consider the choice $\pi(\a|\x) = \gN(\a;\, \x, \eta^2 I)$ for $\eta^2$ a scalar  hyperparameter. 
With this choice, the constraint of interest is satisfied since the Isotropic Gaussian is rationally invariant and centered at $\x$. To see this, let $t \in \SE3$ be decomposed into two components $t=(R, u)$ where $R \in \SO3$ is a $3\times 3$ rotation matrix and $u \in \R^3$ represents a translation,
\begin{align}\label{eq:isotropic_gaussian_rot_inv}
    \gN(R\a+u;\, R\x+u, \, \eta^2 I) &= \frac{1}{Z} \exp \frac{-1}{2} (R\a+u - R\x+u)^\top (\eta^{-2} I) (R\a+u - R\x+u)  \\
    &=\frac{1}{Z} \exp \frac{-1}{2} (\a - \x)^\top R^\top (\eta^{-2} I) R(\a - \x) \notag \\
    &= \frac{1}{Z} \exp \frac{-1}{2} (\a - \x)^\top R^\top R (\eta^{-2} I) (\a - \x) \notag \\
    &=  \gN(\a;\, \x, \,\eta^2 I)
\end{align}
An analogous argument holds for permutations, since any $\sigma \in S_n$ can be represented as a unitary $n\times n$ matrix which is simultaneously applied to $\a$ and $\x$.

\subsection{Maximum likelihood objective}\label{app:mle}

We train our models from samples by maximising the joint log-density $\mathbb{E}_{\x \sim p(\x), \a \sim \pi(\a|\x)}[ \log q(\x,\a)]$. This can be shown to be a lower bound on the expected marginal log-density over observations, which is the quantity targeted by models without augmented variables as
\begin{gather}\label{eq:ELBO}
		\E_{p(\x)}[\log q(\x)] = \E_{p(\x)}\left[\log \E_{\pi(\a|\x)} \left[\frac{q(\x,\a)}{\pi(\a|\x)}\right]\right] \geq \E_{p(\x)\pi(\a|\x)} \left[\log q(\x,\a) - \log \pi(\a|\x)\right]
\end{gather}
where we drop the latter term from our training objective since it is constant w.r.t.\ flow parameters. The bound becomes tight when $q(\a|\x) = \pi(\a|\x)$.

\subsection{Reverse KL objective}\label{app:KL}

We now show the reverse KL objective involving the joint distribution of observations and augmented variables presented in the main text upper bounds the reverse KL over observations. Using the chain rule of the KL divergence we have
\begin{align*}
    \KL\left(q(\x,\a)\,||\,p(\x) \pi(\a|\x)\right) &= \KL\left(q(\x)\,||\, p(\x) \right) + \mathbb{E}_{q(\x)} \KL\left( q(\a|\x)\,||\, \pi(\a|\x)\right) \\
    &\geq \KL\left(q(\x)\,||\, p(\x) \right).
\end{align*}
The looseness in the bound is $\mathbb{E}_{q(\x)} \KL\left( q(\a|\x)\,||\, \pi(\a|\x)\right)$ which again vanishes when $q(\a|\x) = \pi(\a|\x)$.

\subsection{Alpha divergence objective} \label{sec:lower_bound_alpha_divergence}
We now show that our joint objective represents an analogous upper bound for the $\alpha$ divergence when $\alpha=2$, i.e., the objective targeted by FAB \citep{Midgley2022fab}. Let $U(\x)$ be the unnormalized energy function of interest evaluated at state $\x$, with $p(\x) \propto \exp(- U(\x))$. We have
\begin{align*}
    D_{2}\left(q(\x)\,||\,p(\x)\right) 
    &= \int \frac{p(\x)^2}{q(\x)}\, \rmd \x \\
    &\leq \int \frac{p(\x)^2}{q(\x)} \left(\int \frac{\pi(\a|\x)^2}{q(\a|\x)} \, \rmd \a \right)\, \rmd \x \\
    &= \int \frac{p(\x)^2 \pi(\a|\x)^2}{q(\x)q(\a|\x)} \,  \rmd \a\, \rmd \x \\
    &= D_{2}\left(q(\x,\a)\,||\,p(\x) \pi(\a|\x)\right)
\end{align*}
which also becomes tight when $q(\a|\x) = \pi(\a|\x)$. The inequality is true because $\int \frac{\pi(\a|\x)^2}{q(\a|\x)} \, \rmd \a > 0$. To see this, take any densities $\pi$ and $q$, and apply Jensen's inequality as
\begin{gather*}
\int \frac{\pi(a)^2}{q(a)} \, \rmd a = \E_{\pi(a)} \frac{\pi(a)}{q(a)} = \E_{\pi(a)} \left(\frac{q(a)}{\pi(a)}\right)^{-1} \geq \left(\E_{\pi(a)}\frac{q(a)}{\pi(a)}\right)^{-1} = \int q(a) \rmd a= 1.
\end{gather*}

\subsection{Importance weighted estimator}\label{sec:Importance_weighed}
We define the following estimator $w := \frac{q(\x,\a)}{\pi(\a|\x)}$ with $\a \sim \pi(\cdot|x)$.
Assuming that whenever $\pi(\a|\x) = 0$ then $q(\x,\a) = 0$, we have that this estimator is trivially unbiased:
\begin{align*}
    \mathbb{E}[w] &= \int \frac{q(\x,\a)}{\pi(\a|\x)} \pi(\a|\x) d \mu(\a) \\
    &= \int q(\x,\a) d \mu(\a) \\
    &= q(\x)
\end{align*}
and that is variance is given by
\begin{align*}
    \mathbb{V}[w] &= \mathbb{E}[(w - q(\x))^2] \\
    &= \mathbb{E}[w^2] - \mathbb{E}[2wq(\x)] + \mathbb{E}[q(\x)^2] \\
    &= \int \frac{q(\x,\a)^2}{\pi(\a|\x)^2} \pi(\a|\x) d \mu(\a) - 2q(\x)\mathbb{E}[w] + q(\x)^2 \\
    &= \int \frac{q(\x)q(\a|\x)}{\pi(\a|\x)} q(\x,\a) d \mu(\a) - q(\x)^2
\end{align*}
thus if $q(\a|\x) = \pi(\a|\x)$, $\mathbb{V}[w] = 0$.

The log marginal likelihood may therefore be estimated with
\begin{gather}
\label{eqn:marginal-log-lik}
 \log q(\x) \approx \log \frac{1}{M} \sum_{m=1}^M \frac{q(\x,\a_m)}{\pi(\a_m|\x)}  \quad \mathrm{and} \quad \a_m \sim \pi(\cdot|\x),
\end{gather}
which becomes accurate as M becomes large.

\subsection{Effective sample size}\label{sec:ess}
For evaluation we use both the reverse and forward effective sample size (ESS), where the former relies on samples from the model, and the latter from samples from the target.
For the augmented flow models we report the joint ESS, which lower bounds the marginal ESS.
Below, we provide the formulas for the reverse and forward ESS. 
For simplicity, we define these in terms of the marginal ESS.
The joint ESS estimators may be obtained using the same formulas, replacing $\x$ with $(\x, \a)$. 

The reverse effective sample size is given by
\begin{gather}
    n_\text{e,rv} = \frac{(\sum_{i=1}^n w(\x_i))^2}{\sum_{i=1}^n w(\x_i)^2}  \quad \mathrm{and} \quad \x_i \sim q(\x_i),
\end{gather}
where $w(\x_i) = \tilde{p}(\x_i)/q(\x_i)$ and $\tilde{p}$ is the unnormalized version of $p$ \citep{mcbook_owen}. 

The forward ESS may be estimated with
\begin{gather}
    n_\text{e,fwd} = \frac{n^2}{Z^{-1}\sum_{i=1}^n w(\x_i)}  \quad \mathrm{and} \quad \x_i \sim p(\x_i),
\end{gather}
where the normalizing constant $Z = \tilde{p}/p$ may be estimated using
\begin{align}
    Z^{-1} &= E_p \left[\frac{q(\x)}{\tilde{p}(\x)} \right] \\
           &\approx \frac{1}{n} \sum_{i=1}^n  \frac{q(\x_i)}{\tilde{p}(\x_i)} \quad \mathrm{and} \quad \x_i \sim p(\x_i). 
\end{align}

\subsection{Conversion of internal coordinate flow density to density on ZeroCoM Cartesian space with global rotation represented}
\label{app:internal-conversion}
The density of the flow on internal coordinates is not directly comparable to the flows on Cartesian coordinates, as the variables live in a different space (bond distances, angles and torsions instead of Cartesian coordinates). 
The internal coordinates, and density may be converted into Cartesian coordinates via passing them through the appropriate bijector and computing the log determinant of the bijector.
This is commonly done in existing literature as the target energy typically takes Cartesian coordinates as input \citep{Midgley2022fab}. 
However, this representation is invariant to global rotations of the molecule, making it lower dimensional than the space with which the flows which introduced in this paper operate. 
Furthermore, this representation has the first atom centered on the origin, where the flows in this paper instead have zero-CoM.
To compare the density of the flow on internal coordinates with the flows that operate directly on Cartesian coordinates we have to account for both of these aforementioned factors. 
We describe how to do this below. 

\paragraph{Conversion to density that includes degrees of freedom representing global rotation}
The rotation-invariant representation of internal coordinates in Cartesian space places the first atom at the origin, the second atom a distance of $b_1$ along the X-axis (i.e. at position $\{b_1, 0, 0\}$), and the third atom on the XY plane, at a distance of $b_2$ from the origin at an angle of $a2$ with respect to the x-axis (i.e. at position $\{\cos(a_2) \cdot b_2, \sin(a_2) \cdot b_2, 0\}$). 
We define this representation of Cartesian coordinates as $z \in Z$, where we drop all coordinates of the first atom (atom 0), the $y$ and $z$ coordinates of atom 1, and the $z$ coordinate of atom 2, all of which are zero, so that $Z$ contains as many elements as there are degrees of freedom. 

To convert this into a space that has degrees of freedom representing global rotations, which we refer to as $w \in W$, we simply sample a rotation uniformly over $SO3$ and apply this to $w$. 

The rotation has three degrees of freedom, which we represent as follows; 
First, we apply rotation about the X-axis of $o_1$ sampled uniformly over $[-\pi, \pi]$.
Secondly, we rotate about the Z-Axis by sampling $o_2$ uniformly between $[-1, 1]$ and rotating by $\sin^{-1}(o_2)$.
Finally, we rotate about the Y-axis by $o_3$ sampled uniformly between $[-\pi, \pi]$. 
Together this ensures that the resultant orientation of the molecule in Cartesian space has a uniform density over $SO3$.
We denote the concatenation of these random variables that parameterize the rotation as $\o \in \O$ where $\O = \{[-\pi, \pi], [-1, 1], [-\pi, \pi]\}$. 

The density on $W$ is then given by
\begin{gather}
\label{eqn:density-with-rotation}
 q(\w) = q(\z)q(\o)  \, \left| \det \frac{\partial g(\z, \o)}{\partial \{\z, \o\}} \right|^{-1}.
\end{gather}
where $g: \Z \times \O \to \W$ is a bijector which applies the rotation parameterized by $o$ to $z$. 

It can be shown that the log jacobian determinant of $g$ is $2 \log(b_1) + \log(b_2) + \log(\sin(a_2))$. 
Furthermore, $\log q(\o)$ is a constant value of $-\log 8 \pi$.

\paragraph{Conversion density on Zero-CoM space}
After applying the above transform we have $\w$ that includes rotations, but is centered on atom 0. 
Thus, the last step needed is to convert into the Zero-CoM space. 
As previoulsy, to ensure that our representations have the same number of elements as there are degrees of freedom, we drop the first atom's Cartesian coordinates, as these contain redundant information.
We note that for the representation $\w \in \W$ the first atom's coordinates are at the origin, and for the Zero-CoM space $\x \in \tX$ they are at $-\sum_{i=1}^{n-1} x_i$.
The bijector $h: \W \to \tX$ is given by
\begin{gather}
\label{eqn:density-to-zeroCom}
  h(\w) = \w - \frac{1}{N} \sum_{i=1}^{n-1} \w_i 
\end{gather}
where $N$ is the number of nodes. 
The log jacobian determinant of $h$ is simply equal to $-3 \log n$.

\paragraph{Overall conversion}
Thus we can convert the density in $\Z$ into an equivalent density in $\tX$ using,
\begin{equation}
\label{eqn:internal-conversion}
\begin{aligned}
 \log q(\x) &= \log q(\z) + \log q(\o) + \log  \, \left| \det \frac{\partial g(\z, \o)}{\partial \{\z, \o\}} \right|^{-1}  + \log \left| \det \frac{\partial h(\w)}{\partial \w} \right|^{-1}  \\
      &= \log q(\z)  - \log 8 \pi^2 -  2 \log(b_1) - \log(b_2) - \log(\sin(a_2)) + 3 \log n. 
\end{aligned}
\end{equation}

\section{Experiments: Details and Further Results}
\label{app:experiments}

We provide the details of the hyper-parameters and compute used for each experiment. 
Additionally, we provide the code \url{https://github.com/lollcat/se3-augmented-coupling-flows} for replicating our experiments. It also contains the configuration files with the exact hyperparameter for each model and training run.

\subsection{Custom Optimizer}
\label{app:exp-opt}
For all experiments we use a custom defined optimizer, which improves training stability by skipping updates with very large gradient norms, and clipping updates with a moderately large gradient norms.
Our optimizer keeps track of a window the the last 100 gradient norms, and then for a given step: (1) If the gradient norm is more than 20 times greater than the median gradient norm within the window, then skip the gradient step without updating the parameters. (2) Otherwise, if the gradient norm is more than five times the median within the window then clip the gradient norm to five times the median. (3) Use the (possibly clipped) gradients to perform a parameter update using the adam optimizer \citep{kingma2014adam}.
Lastly, we also include an optional setting which allows for masking of gradient contribution of samples within each batch with very large negative log probability during training, which further improves training stability. 

\subsection{EGNN implementation}
\label{app:exp-egnn}
We use our own custom implementation of the EGNN proposed by \cite{Satorras2021GNN} for our experiments.
We found that applying a softmax to the invariant features output from the EGNN helped improve stability. 
Additionally, we extend \cite{Satorras2021GNN}'s EGNN to allow for multiple channels of positions/vectors at each node, with interaction across channels. 
This is needed as the local reference frame $r$ typically is parameterized by multiple equivariant vectors, and for the case where the multiple sets of augmented variables are used such that there are multiple sets of positions input to the EGNN. 
We refer to the code for the details of our EGNN implementation. 

\subsection{Training with samples}
\subsubsection{DW4, LJ13 and QM9 Positional}
\label{app:experiments-pos-only}
\paragraph{Experimental details}

Below we provide a description of the hyper-parameters used within each experiment. 
Additionally \cref{tab:ml_simple_runtimes} provides the run-times for each experiment.

\textbf{Datasets}:
For DW4 and LJ13 we use a training and test set of 1000 samples following \cite{satorras2021Equivariant}.
We note that this data is biased with a clear difference between the distribution of train and test data (see \cref{fig:dw4-lj13-train-test}.  
For QM9-positional we use a training, validation and test set of 13831, 2501 and 1,813 samples respectively also following \cite{satorras2021Equivariant}.

\begin{figure}[h]
    \centering
    \includegraphics[width=\textwidth]{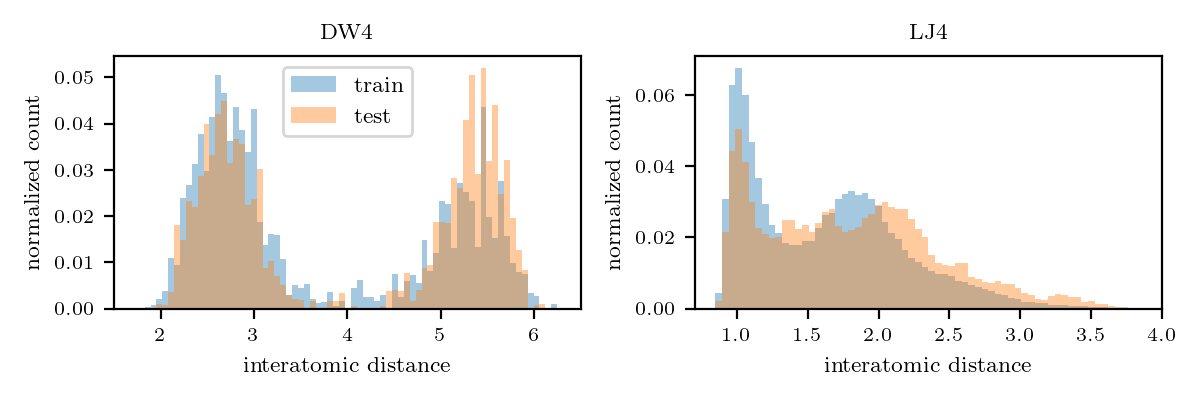}
    \caption{Inter-atomic distances for samples from the train and test data on DW4 and LJ13. Both datasets are biased with visible differences between the train and test sets.}
    \label{fig:dw4-lj13-train-test}
\end{figure}

\textbf{Flow experiment details}
All flow models using 12 blocks (see \cref{alg:flow_block} for the definition of a block).
For the noise scale of the augmented variables sampled from $\gN(\a;\, \x, \eta^2 I)$, we use $\eta=0.1$ for both the base of the flow, and augmented target distribution. 
For the \cartproj \ \eanf \ and \noneanf \ we use RealNVP \citep{Dinh2017RealNVP} for the flow transform, while for \cartproj \ and \vecproj \ we use a rational quadratic spline transform \citep{Durkan2019spline}. 
For the spline we use 8 bins for all flows.
We use 1 set of augmented variables for DW4 and LJ13, and 3 sets of augmented variables for QM9-positional. 
For all equivariant GNN's we implemented the GNN proposed by \cite{Satorras2021GNN} with 3 blocks.
All MLP's have 2 layers, with a width of 64 units. 
For the \noneanf \ model we use a transformer with 3 layers of self-attention with 8 heads.
We use a cosine learning rate schedule, that is initialised to a value of 0.00002, peaks at 0.0002 after 30 epochs, and then decays back to a final learning rate of 0.00002 for DW4 and LJ13.
For QM9 the learning rate is initialised to a value of 0.00006, peaks at 0.0002 after 30 epochs, and then decays back to a final learning rate of 0.00006.
With the above learning rate schedule for QM9 the \cartproj\ \eanf\ was unstable during training which resulted in crashed runs (NaN loss). 
Hence, it was instead trained with a constant learning rate of 0.00006 at which training was stable. 
We use a batch size of 32 for all experiments, and 100, 400 and 800 epochs of the datasets for DW4, LJ13 and QM9-positional respectively. 
We train the \noneanf \ for 4 times as many epochs for all problems, which results in roughly the same run time as the \sphproj \ \eanf. 
For the \cartproj \ and \sphproj \ \eanf, we use a coefficient of 10. for the auxiliary loss (\cref{eqn:aux-loss}), for weighting it relative to the maximum likelihood loss. 
Marginal log likelihood in \cref{tab:ml_simple} is calculated using \cref{eqn:marginal-log-lik} with 20 samples from the augmented variables $\a$ per test-set sample of $\x$.

\textbf{Continuous normalizing flow / diffusion experiment details}
For the \ecnf \ {\scshape ML} we report the results from \cite{satorras2021Equivariant}.
For the \ecnf \ {\scshape FM} we implement flow matching, with the model's vector field parameterized by the E(n) GNN of \cite{Satorras2021GNN}.
For DW4 and LJ13 we use an EGNN with 3 message passing layers, with a 3-hidden layer MLP with 128 units per layer.
For QM9-positional we use an EGNN with 5 message passing layers, with an 4-hidden layer MLP with 256 units per layer.
For all problems we use Adam optimizer with a cosine learning rate decay from 1e-4 to 0.
Additionally, for QM9, we use EMA with $\beta=0.999$ to compute the final training weights which the model is evaluated with. 
We train for 200, 400, and 1600 epochs on DW4, LJ13 and QM9-positional respectively.
For sampling and evaluating log-densities we use the \texttt{dopri5} ODE solver \citep{dormand1980family} with an \texttt{rtol} and \texttt{atol} of 1e-5. 
For DW4 and LJ13 we evaluate densities using the exact divergence of the vector field, while for QM9 we approximate the divergence using the Hutchinson trace estimator \citep{Chen2018FFJORD} as the exact divergence is computationally expensive.

Our \ediff \ implementation roughly follows \cite{Hoogeboom2022Diffusion}, but we use the noise schedule as \cite{ho2020denoising}. We use the same GNN architecture as for our flows, that is the E(n) GNN of \cite{Satorras2021GNN}, to predict the diffusion noise. The network is composed of 3 message passing layers, each of which uses a 3 hidden layer MLP with 256 units per layer. GNN expands its inputs to 256 scalars and 196 vectors per graph node and returns its predictions as a linear combination of the latter 196 vectors. This model has around $90\text{M}$ learnable parameters. We use the AdamW optimiser with a weight decay of $10^{-4}$ and run it for 100, 200, and 5000 epochs for the DW4, LJ13 and QM9 dataset, respectively. The initial learning rate is set to $10^{-5}$. It is linearly increased to $5 \cdot 10^{-4}$ for the first 15\% of training before being decreased to $10^{-6}$ with a cosine schedule. Following, \cite{kingma2021vdm}, we sample noise timesteps during such that these are uniformly spaced within each batch, reducing gradient variance. 
We set the smallest accessible time value to $10^{-3}$.
For sampling and evaluating log-densities we use the \texttt{dopri5} ODE solver with a maximum number of evaluations of 1000.

\begin{table}[h]
    \caption{Run time for training by maximum likelihood on the DW4, LJ13 and QM9-positional datasets, in hours. All runs used an GeForce RTX 2080Ti GPU . 
Run times include evaluation that was performed intermittently throughout training. 
The results are averaged over 3 seeded runs, with the standard error reported as uncertainty. 
    }
    \label{tab:ml_simple_runtimes}
    \begin{center}
    \small
    \begin{tabular}{l 
    D{,}{\hspace{0.05cm}\pm\hspace{0.05cm}}{4}
    D{,}{\hspace{0.05cm}\pm\hspace{0.05cm}}{4}
    D{,}{\hspace{0.05cm}\pm\hspace{0.05cm}}{4}
    }
    \toprule
        \multicolumn{1}{c}{} & \multicolumn{1}{c}{DW4}  & \multicolumn{1}{c}{LJ13} & \multicolumn{1}{c}{QM9 positional} \\
    \midrule
    \ecnf \ {\scshape ML} & N/A ,N/A & N/A,N/A & 336,N/A \\
    \ediff & N/A,N/A & N/A,N/A & 3.80,0.00 \\
\noneanf & 0.12,0.00 & 0.54,0.00 & 20.68,0.09 \\ 
 \vecproj \ \eanf & 0.16,0.00 & 0.61,0.00 & 19.98,0.04 \\ 
 \cartproj \ \eanf & 0.12,0.00 & 0.50,0.00 & 17.13,0.33 \\ 
 \sphproj \ \eanf & 0.15,0.00 & 0.59,0.00 & 20.61,0.17 \\ 
    \bottomrule
    \end{tabular}
\end{center}
\end{table}

\paragraph{Further results}
\cref{tab:ml_simple_ess} provides the effective sample size of the flows trained by maximum likelihood, with respect to the joint target distribution $p(\x)\pi(\a|\x)$.

\begin{table}[H]
    \caption{Joint effective sample size (\%) evaluated using 10000 samples. The results are averaged over 3 seeded runs, with the standard error reported as uncertainty. }
    \label{tab:ml_simple_ess}
    \begin{center}
    \begin{tabular}{l     
    D{,}{\hspace{0.05cm}\pm\hspace{0.05cm}}{5}
    D{,}{\hspace{0.05cm}\pm\hspace{0.05cm}}{5}
    D{,}{\hspace{0.05cm}\pm\hspace{0.05cm}}{5}
    D{,}{\hspace{0.05cm}\pm\hspace{0.05cm}}{5}
    }
    \toprule
        \multicolumn{1}{c}{} & \multicolumn{2}{c}{DW4}  & \multicolumn{2}{c}{LJ13}\\
        \multicolumn{1}{c}{} & \multicolumn{1}{c}{Rev ESS (\%)} & \multicolumn{1}{c}{Fwd ESS (\%)} & \multicolumn{1}{c}{Rev ESS (\%)} & \multicolumn{1}{c}{Fwd ESS (\%)}  \\
    \midrule
\ecnf\ \scshape{FM} & 13.00,0.92 & 10.95,1.76 & 0.02,0.00 & 2.37,0.35 \\
\noneanf & 0.68,0.18 & 0.06,0.02 & 0.00,0.00 & 0.00,0.00 \\ 
 \vecproj \ \eanf & 1.87,0.68 & 0.16,0.01 & 1.80,0.17 & 0.06,0.03\\ 
 \cartproj \ \eanf & 3.08,0.05 & 0.09,0.03 & 0.89,0.37 & 0.00,0.00\\ 
 \sphproj \ \eanf & 3.06,0.60 & 0.17,0.05 & 1.50,0.37 & 0.02,0.00\\ 
    \bottomrule
    \end{tabular}
\end{center}
\end{table}

\paragraph{Varying the number of training samples for DW4 and LJ13}
\cref{tab:simple-ml-diff-ds} shows the effect of varying the training dataset size on the performance of the flows.
We tune the number of epochs via the validation set; we use 1000, 100, 50 and 35 epochs for DW4 and 1500, 1500, 400 and 100 LJ13 for each dataset size respectively. 
Additionally we train \noneanf\ (with data augmentation) for 4 times as many epochs for each experiment, as it is computationally cheaper per epoch and takes a larger number of epochs before it overfits.
We see that the performance gap between the non-equivariant and equivariant flow models is larger in the lower data regime. 

\begin{table}[h]
\small
    \caption{Negative log likelihood results on test set for flows trained by maximum likelihood for the DW4 and LJ13 datasets varying the number of training samples. \ecnf \ results are from \cite{satorras2021Equivariant}. 
    Best results are emphasized in \textbf{bold}.}
    \label{tab:simple-ml-diff-ds}
    \begin{center}
    \begin{tabular}{l cccccccc
    }
    \toprule
        \multicolumn{1}{c}{} & \multicolumn{4}{c}{DW4}  & \multicolumn{4}{c}{LJ13}\\
        \multicolumn{1}{c}{\# samples} &
        \multicolumn{1}{c}{$10^2$} & \multicolumn{1}{c}{$10^3$} &  \multicolumn{1}{c}{$10^4$} & \multicolumn{1}{c}{$10^5$} & \multicolumn{1}{c}{$10$} & \multicolumn{1}{c}{$10^2$} & \multicolumn{1}{c}{$10^3$} &  \multicolumn{1}{c}{$10^4$}   \\
    \midrule
\ecnf & \textbf{8.31} & \textbf{8.15} & \textbf{7.69} & 7.48 & \textbf{33.12} & 30.99 & 30.56 & 30.41 \\ 
\noneanf & 10.24 & 10.07 & 8.23 & 7.39 & 40.21 & 34.83 & 33.32 & 31.94 \\ 
\vecproj \ \eanf & 8.65 & 8.69 & 7.95 & 7.47 & 33.64 & \textbf{30.61} & \textbf{30.19} & \textbf{30.09} \\ 
\cartproj \ \eanf & 8.79 & 8.82 & 7.93 & 7.52 & 36.66 & 31.72 & 30.89 & 30.48 \\ 
\sphproj \ \eanf & 8.66 & 8.61 & 7.97 & \textbf{7.38} & 34.79 & 31.26 & 30.33 & 30.16 \\ 
    \bottomrule
    \end{tabular}
\end{center}
\end{table}

\subsubsection{Alanine dipeptide}
\label{sec:app_aldp_exp}

\begin{figure}[h]
    \centering
    \includegraphics[width=0.5\textwidth]{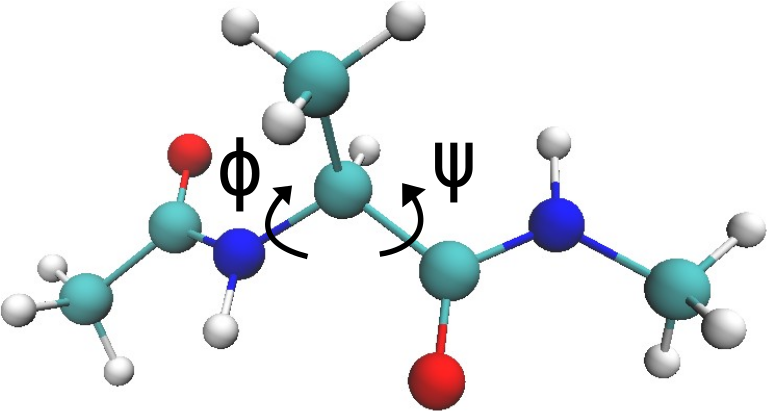}
    \caption{Visualization of the 22-atom molecule alanine dipeptide with its two dihedral angles $\phi$ and $\psi$.}
    \label{fig:aldp_dih}
\end{figure}

The molecule alanine dipeptide and its dihedral angles $\phi$ and $\psi$, which are used in the Ramachandran plots in \cref{fig:aldp_rm}, are depicted in \cref{fig:aldp_dih}. We model it in an implicit solvent at a temperature of $T=\SI{800}{\kelvin}$. The data used for maximum likelihood training and testing was generated with a replica exchange MD simulation \citep{mori2010} with 21 systems having temperatures from $\SI{300}{\kelvin}$ until $\SI{1300}{\kelvin}$ with $\SI{50}{\kelvin}$ temperature difference between each of them.

Each ACF has 20 layers with 3 core transformations per layer for the \vecproj\ flow and 1 for the \cartproj\ and the \sphproj\ flow. The flow trained on internal coordinates has the same architecture as the models in \cite{Midgley2022fab}, i.e. it is a coupling neural spline flow with 12 layers. The models were trained for 50 epochs.

The \ecnf\ and the \ediff\ model have E(n) GNN with 4 message passing layers, each using a 3-layered MLP with 128 hidden units each, as a vector field and score function, respectively. Both were trained for 20 epochs.

Since the version of the \vecproj\ flow as well as the \ecnf\ and \ediff\ models are parity equivariant, the model learns to generate both chiral forms of alanine dipeptide \citep{Midgley2022fab}. 
For evaluating the model, we filtered the samples from the model to only include those that correspond to the L-form, which is the one almost exclusively found in nature. 

To plot and compute the KLD of the Ramachandran plots, we drew $10^7$ samples from each model. Since computing the likelihood of the \ecnf\ and the \ediff\ model is so expensive, we evaluated these models only on 10\% of the test set, i.e. $10^6$ datapoints.

\paragraph{Limiting the number of samples}
To test how well \eanf s generalizes thanks to their equivariance, we trained our three variants of \eanf s and an \noneanf on subsets of the training dataset with various sizes. The results are given in \autoref{tab:aldp_train_size}. For all training dataset sizes the equivariant architectures clearly outperform the non-equivariant counterpart. Notably, the \noneanf\ trained on the full dataset performs worse than the \vecproj \ \eanf\ trained on $10\%$ of the data and the \cartproj \ and \sphproj\ \eanf\ trained on just $1\%$ of the data.
\begin{table}[h]
\small
    \caption{NLL of the models trained on subsets of the training dataset with different sizes. The best performing model, i.e. the one with the lowerst NLL, is marked in \textbf{bold} for each training dataset size.}
    \label{tab:aldp_train_size}
    \begin{center}
    \begin{tabular}{l c c c c c}
    \toprule
    Training dataset size & $10^2$ & $10^3$ & $10^4$ & $10^5$ & $10^6$ \\
    \midrule
    \noneanf & $-28.8$ & $-76.8$ & $-120.4$ & $-164.3$ & $-177.8$ \\
    \vecproj \ \eanf & $-31.0$ & $-111.4$ & $-169.5$ & $-183.6$ & $-188.5$  \\
    \cartproj \ \eanf & $\mathbf{-65.1}$ & $\mathbf{-155.7}$ & $\mathbf{-185.9}$ & $-188.1$ & $\mathbf{-188.6}$ \\
    \sphproj \ \eanf & $-53.1$ & $-119.3$ & $-183.6$ & $\mathbf{-188.3}$ & $\mathbf{-188.6}$ \\
    \bottomrule
    \end{tabular}
\end{center}
\end{table}

\paragraph{Reweighting}
Given the high effective sample sizes we obtained for our models (see \cref{tab:aldp}), we also attempted to reweight the samples. As already observed by \cite{Midgley2022fab}, for a large number of samples, i.e. $10^7$, there were a few high outlier importance weights, which spoil the importance weight distribution. Hence, we clipped the highest $0.2\%$ of the importance weights as in \citep{Midgley2022fab}.

\begin{figure}
    \centering
    \subfloat[Ground truth]{\includegraphics[width=0.3\textwidth]{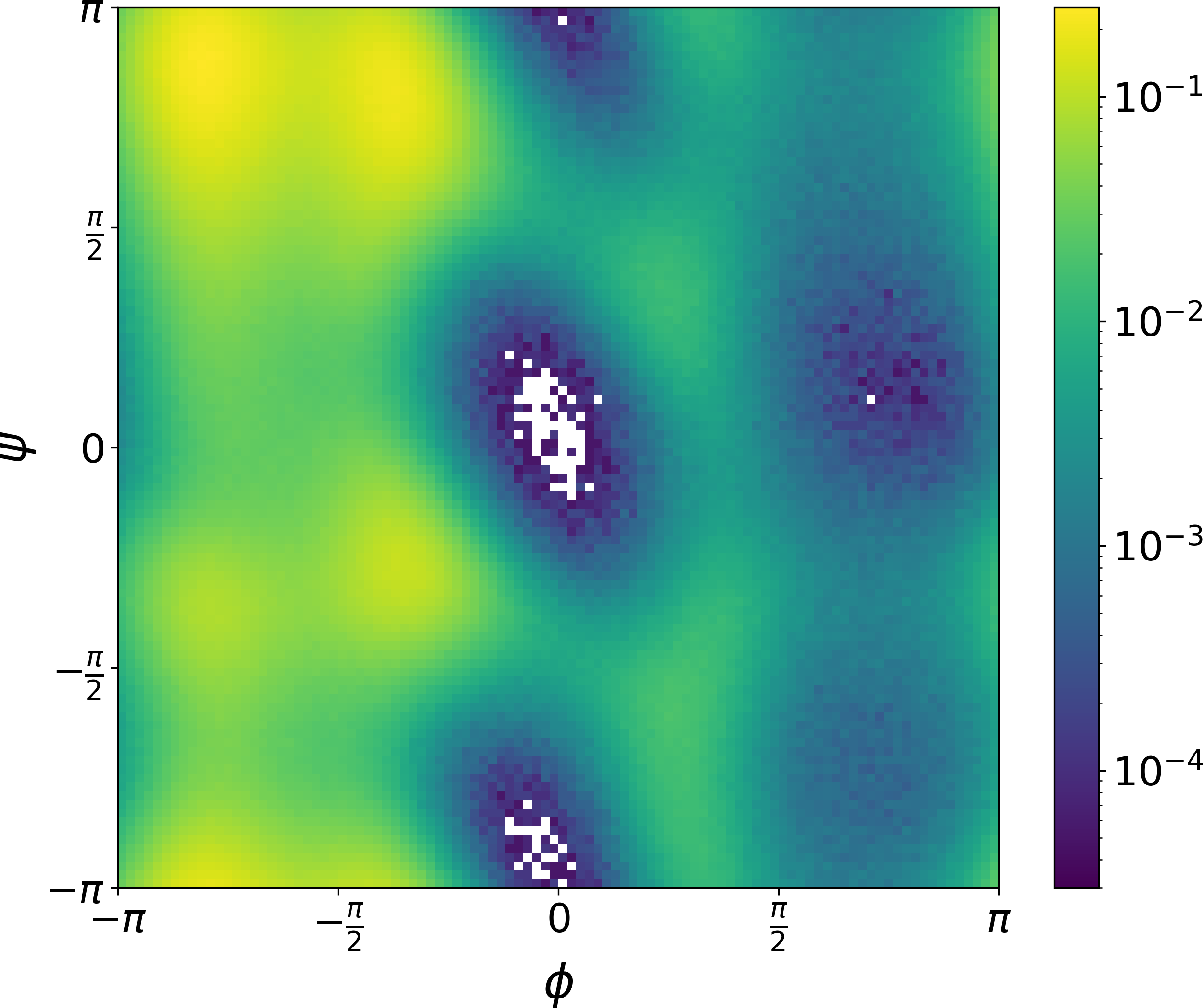}}
    \hfil
    \subfloat[Raw model samples]{\includegraphics[width=0.3\textwidth]{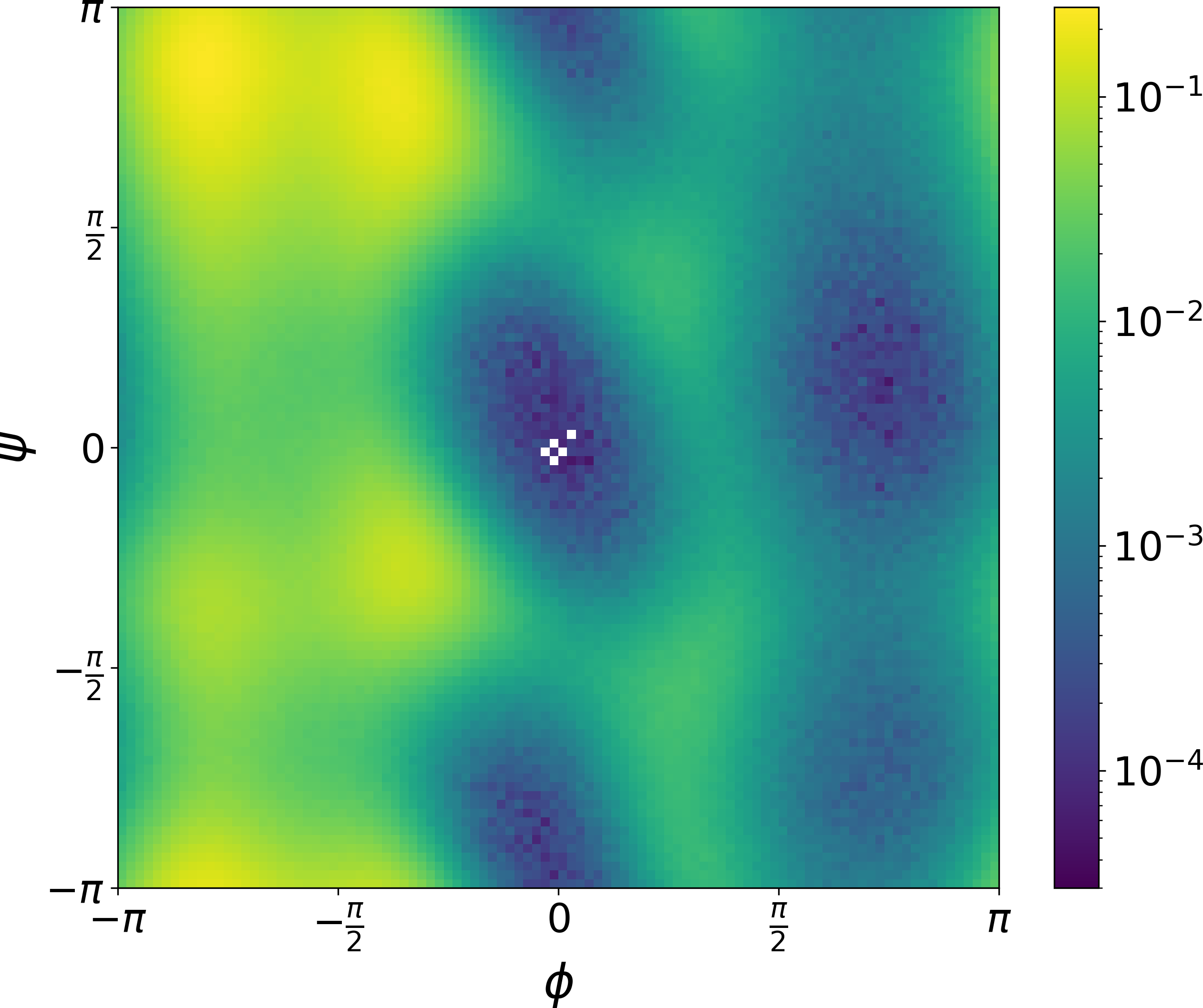}}
    \hfil
    \subfloat[Reweighted model samples\label{fig:aldp_800_rw_ram}]{\includegraphics[width=0.3\textwidth]{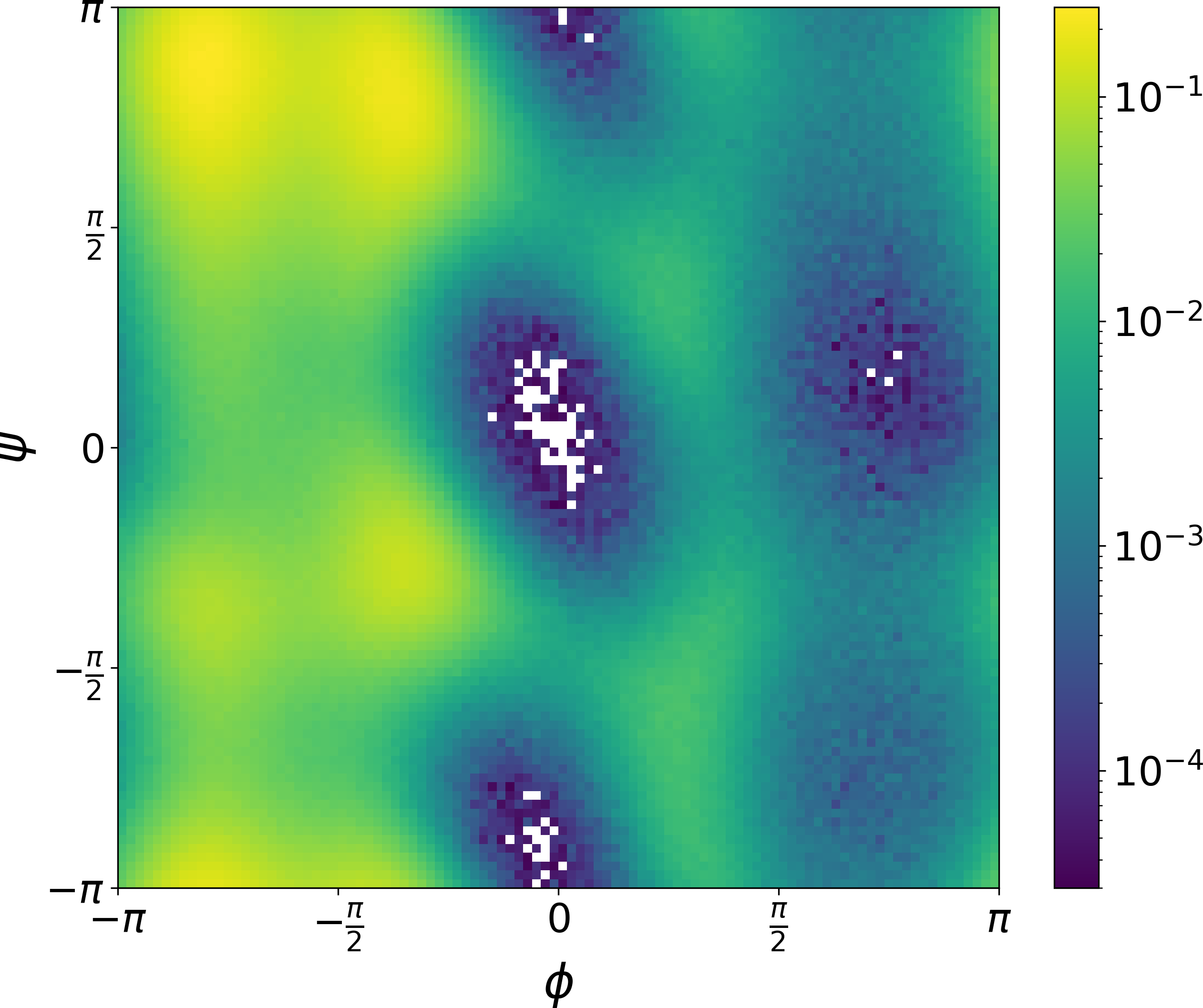}}
    \caption{Ramachandran plot of a \sphproj\ \eanf\ model trained on alanine dipeptide at $\SI{800}{\kelvin}$. The samples in \cref{fig:aldp_800_rw_ram} are reweighted with the importance weights. Thereby, the KLD drops from $1.88\cdot10^{-3}$ to $1.39\cdot10^{-3}$.}
    \label{fig:aldp_800_rw}
\end{figure}

The results for our \sphproj\ \eanf\ model are shown in \cref{fig:aldp_800_rw}. The reweighted Ramachandran plot is significantly closer to the ground truth in low density regions. The KLD is reduced from $1.88\cdot10^{-3}$ to $1.39\cdot10^{-3}$.

\paragraph{Alanine dipeptide at $\bm{T=300\text{K}}$}
Although our experiments were done on alanine dipeptide at $T=800\text{K}$, we also tried $T=300\text{K}$. We trained our \sphproj\ \eanf\ model on the same dataset used in \citep{Midgley2022fab}. As shown in \cref{fig:aldp_300_rw}, we got good performance. Even without reweighting our model has a KLD of $2.79\cdot10^{-3}$. When doing reweighting, it decreases to $1.97\cdot10^{-3}$. For comparison, the flows trained on internal coordinates presented in \citep{Midgley2022fab} had a KLD of $(7.57\pm3.80)\cdot 10^{-3}$, and reweighting worsened the results given the low ESS. However, these models had about 6 times fewer parameters than our model.

\begin{figure}
    \centering
    \subfloat[Ground truth]{\includegraphics[width=0.3\textwidth]{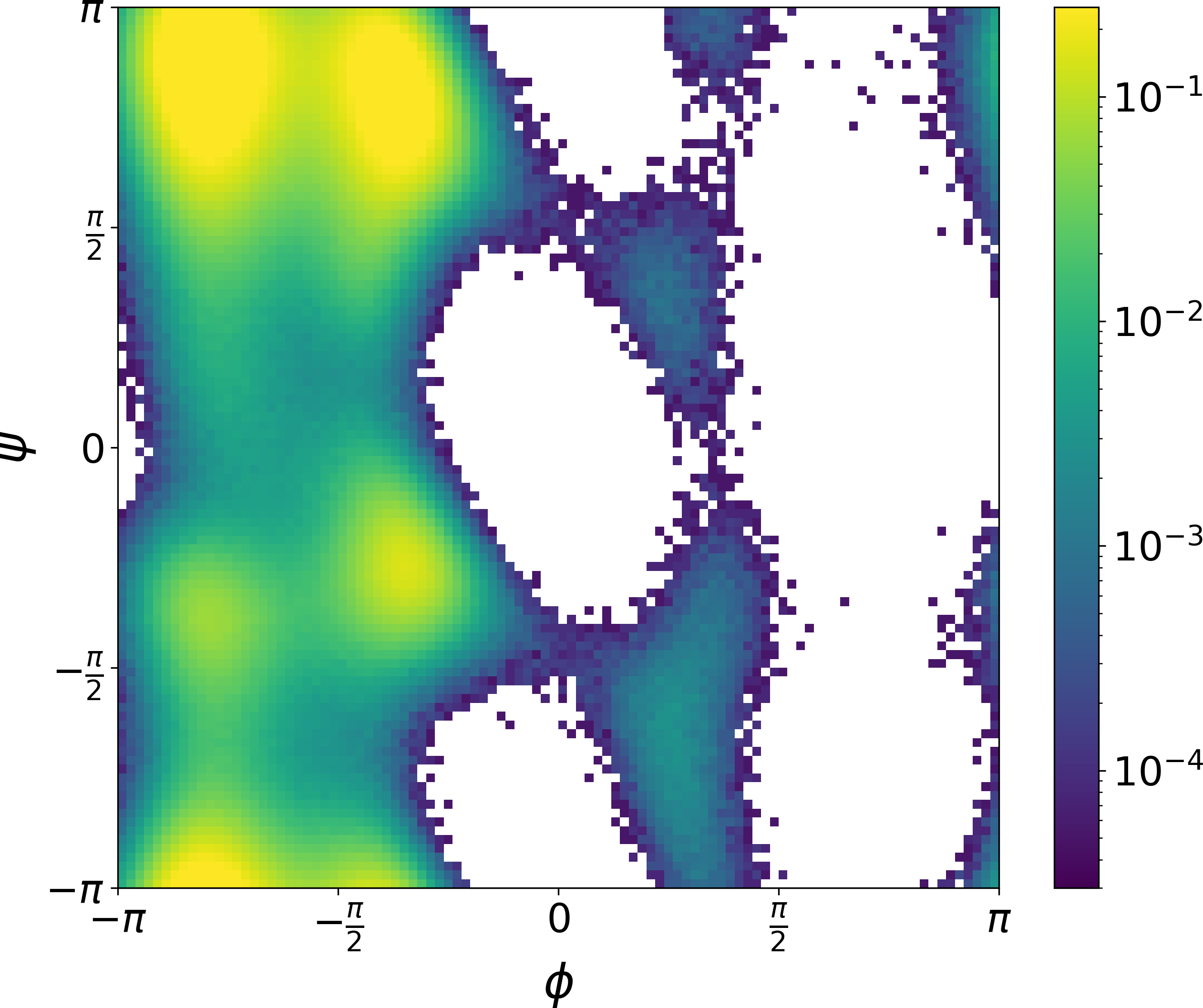}}
    \hfil
    \subfloat[Raw model samples]{\includegraphics[width=0.3\textwidth]{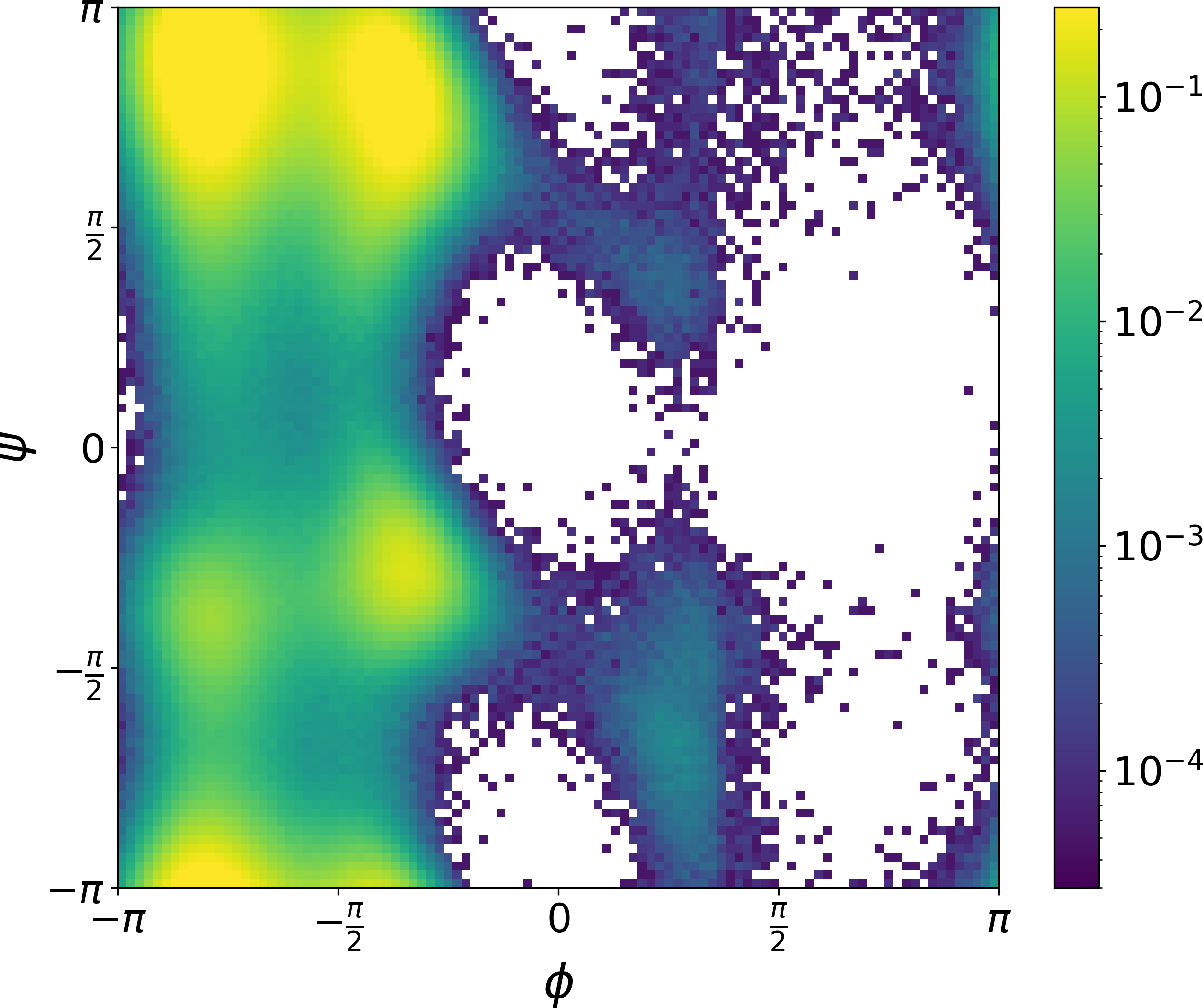}}
    \hfil
    \subfloat[Reweighted model samples]{\includegraphics[width=0.3\textwidth]{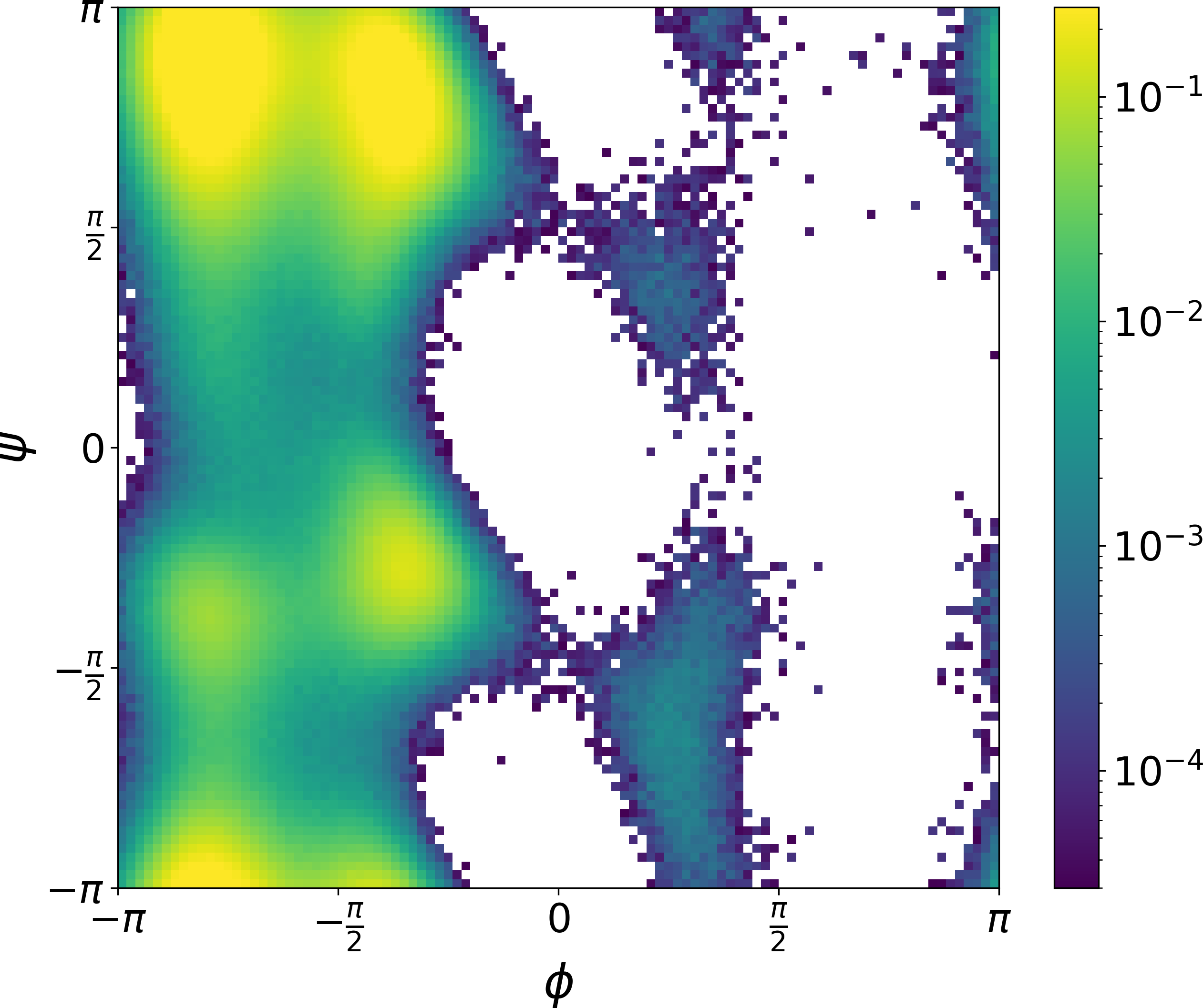}}
    \caption{Ramachandran plot of a \sphproj\ \eanf\ model trained on alanine dipeptide at $\SI{300}{\kelvin}$. The raw and reweighted results are shown. Through reweighting, the KLD drops from $2.79\cdot10^{-3}$ to $1.97\cdot10^{-3}$.}
    \label{fig:aldp_300_rw}
\end{figure}

\subsection{Energy-based training}
\label{app:exp-by-energy}
For both the DW4 and LJ13 experiments we use an identical flow architecture and optimizer setup to the flows trained by maximum likelihood with samples \cref{app:experiments-pos-only}.
We use a batch size of 128 and 1024 for DW4 and LJ13 respectively.
We train for 20000 and 14000 epochs for DW4 and LJ13 for the \eanf.
For the \noneanf \ we train for 60000 and 56000 epochs for DW4 and LJ13 respectively, which corresponds to roughly the same training time as the \sphproj \ \eanf \ because the \noneanf \ is faster per iteration.
Reverse ESS is estimated with 10,000 samples from the flow. Forward ESS is estimated with the test sets.

\textbf{Hyper-parameters for FAB}: We set the minimum replay buffer size to 64 batches and maximum size to 128 batches.
For the importance weight adjustment for re-weighting the buffer samples to account for the latest flow parameters, we clip to a maximum of 10.
For AIS we use HMC with 5 leapfrog steps, and 2 and 8 intermediate distributions for DW4 and LJ13 respectively. 
We tune the step size of HMC to target an acceptance probability of 65\%. 
We perform 8 buffer sampling and parameter update steps per AIS forward pass.

\begin{table}[h]
    \caption{Run time for energy-based training with FAB on the DW4 and LJ13 problems. DW4 was run using a GeForce
RTX 2080Ti GPU and LJ13 with a V-4 TPU. Run times include evaluation that was performed intermittently throughout training. The results are averaged over 3 seeded runs, with the standard error reported as uncertainty. 
    }
    \label{tab:fab_runtimes}
    \begin{center}
    \small
    \begin{tabular}{l 
    D{,}{\hspace{0.05cm}\pm\hspace{0.05cm}}{4}
    D{,}{\hspace{0.05cm}\pm\hspace{0.05cm}}{4}
    }
    \toprule
        \multicolumn{1}{c}{} & \multicolumn{1}{c}{DW4}  & \multicolumn{1}{c}{LJ13}  \\
    \midrule
\noneanf & 8.9,0.0 & 38.1,0.0 \\ 
 \vecproj \ \eanf & 7.5,0.0 & 45.0,0.1 \\ 
 \cartproj \ \eanf & 6.3,0.0 & 42.7,0.0 \\ 
 \sphproj \ \eanf & 7.3,0.0 & 44.7,0.0 \\ 
    \bottomrule
    \end{tabular}
\end{center}
\end{table}

\end{document}